%% file: main.tex
\documentclass{article}
\usepackage[table,xcdraw]{xcolor}
\definecolor{colA}{rgb}{0.92, 0.96, 1.0} 
\definecolor{colB}{rgb}{0.92, 1.0, 0.92} 
\usepackage[accepted]{icml2026}
\usepackage{pifont}
\usepackage{microtype}
\usepackage{graphicx}
\usepackage{subfigure}
\usepackage{booktabs}
\usepackage{multirow}
\usepackage{colortbl}
\usepackage{enumitem}
\usepackage{tcolorbox}
\usepackage{hyperref} 
\usepackage{etoc}     
\usepackage{amsmath}
\usepackage{amssymb}
\usepackage{mathtools}
\usepackage{amsthm}
\usepackage{algorithm}
\usepackage{algorithmic}
\usepackage{hyperref}
\usepackage[capitalize,noabbrev]{cleveref}
\theoremstyle{plain}
\newtheorem{theorem}{Theorem}[section]

\theoremstyle{definition}

\theoremstyle{remark}

\definecolor{bestc}{rgb}{0.85, 0.92, 0.97}
\definecolor{secondc}{rgb}{0.92, 0.97, 0.92}
\newcommand{\best}[1]{\cellcolor{bestc}\textbf{#1}}
\newcommand{\second}[1]{\cellcolor{secondc}#1}
\newcommand{\cbest}[1]{\colorbox{bestc}{\textbf{#1}}}
\newcommand{\csecond}[1]{\colorbox{secondc}{#1}}
\newcommand{\bb}[1]{\textbf{#1}}
\icmltitlerunning{Trustworthy Federated Label Distribution Learning under Annotation Quality Disparity}
\begin{document}
	
	\twocolumn[
	\icmltitle{Trustworthy Federated Label Distribution Learning\\  under Annotation Quality Disparity}
\begin{icmlauthorlist}
\icmlauthor{Junxiang Wu}{seu,moe}
\icmlauthor{Zhiqiang Kou*}{seu,moe}
\icmlauthor{Hongwei Zeng}{ucas}
\icmlauthor{Wenke Huang}{whu}
\icmlauthor{Biao Liu}{seu,moe}\\
\icmlauthor{Hanlin Gu}{webank}
\icmlauthor{Yuheng Jia}{seu,moe}
\icmlauthor{Di Jiang}{polyu}
\icmlauthor{Yang Liu}{polyu}
\icmlauthor{Xin Geng}{seu,moe}
\end{icmlauthorlist}

\icmlaffiliation{seu}{
School of Computer Science and Engineering, Southeast University, Nanjing, China
}

\icmlaffiliation{moe}{
Key Laboratory of New Generation Artificial Intelligence Technology and Its Interdisciplinary Applications (Southeast University), Ministry of Education, China
}

\icmlaffiliation{ucas}{
University of the Chinese Academy of Sciences, China
}

\icmlaffiliation{whu}{
Wuhan University, China
}

\icmlaffiliation{webank}{
WeBank, China
}

\icmlaffiliation{polyu}{
Academy for Artificial Intelligence, Hong Kong Polytechnic University, Hong Kong, China
}

\icmlcorrespondingauthor{Zhiqiang Kou}{zhiqiang\_kou@seu.edu.cn}

\icmlkeywords{Federated Learning, Label Distribution Learning }

\vskip 0.3in
]

\printAffiliationsAndNotice{} 
\input{sections/0_abstract}
\input{sections/1_intro}
\input{sections/2_method}

\input{sections/3_theoretical}
\input{sections/4_benchmark}

\input{sections/5_experiments}
\input{sections/6_conclusion}
\bibliography{references}
\bibliographystyle{icml2026}
\newpage
\appendix
\onecolumn 
\input{sections/7_appendix} 
\end{document}

%% file: sections/0_abstract.tex
\begin{abstract}
Label Distribution Learning (LDL) models supervision as an instance-wise probability distribution, enabling fine-grained learning under inherent ambiguity, but its success relies on high-fidelity label distributions that are costly to obtain and thus often noisy. 
Motivated by privacy-sensitive applications, we study \emph{Federated Label Distribution Learning} (Fed-LDL), where data isolation further induces heterogeneous annotation quality across clients, making local updates unevenly reliable and breaking sample-size-based aggregation (e.g., FedAvg). 
To address this trust dilemma, we propose \textit{FedQual}, a quality-aware Fed-LDL framework with two coupled mechanisms: (i) \emph{quality-adaptive client training} guided by a global semantic anchor that calibrates low-quality clients while preserving high-quality autonomy, and (ii) \emph{reliability-aware server aggregation} that reweights client contributions by effective reliable information rather than raw sample size. 
To enable rigorous evaluation, we construct four new Fed-LDL benchmarks (FER-LDL, FI-LDL, PIPAL-LDL, and KADID-LDL) with controlled annotation quality disparity. 
We further provide a theoretical guarantee showing that under heterogeneous supervision quality, client-specific calibration is strictly better than any uniform calibration. 
Extensive experiments on the proposed benchmarks demonstrate the effectiveness  of FedQual.
\end{abstract}

%% file: sections/1_intro.tex
\section{Introduction}
\label{sec:intro}
Label Distribution Learning  (LDL)~\cite{gengldl1} models supervision as a probability distribution, providing a principled way to capture the inherent ambiguity in visual understanding.  Unlike single-label or multi-label classification that enforces a definitive decision, 
LDL quantitatively describes the degree to which each label is relevant to an instance~\cite{gengldl2,kouijcai2}, 
and has been widely adopted in medically relevant applications, such as Breast Tumor Cellularity Assessment~\cite{medical_1}, Bone Age Assessment~\cite{medical_2,medical_3}, and mental-health analysis (e.g., depression detection)~\cite{facil_1,facial_2}.
However, the effectiveness of LDL critically depends on the quality of supervision. 
Constructing reliable label distributions \footnote{LDL can be seen as learning from \emph{soft} targets, but differs from soft labels in single-label classification: soft labels assume exactly one true class per instance, whereas LDL models an \emph{instance-level} label distribution that may reflect \emph{multiple} relevant labels. This instance-level distribution should not be confused with the \emph{label-distribution} (class-frequency) heterogeneity across clients in federated learning.}  theoretically requires aggregating diverse opinions from a sufficiently large pool of annotators~\citep{kouijcai1,xuldl}.  In practice, such high-fidelity annotation is costly and often infeasible~\cite{Xu2018LabelEF}, resulting in noisy label distributions that force models to learn from ambiguous supervision rather than precise ground truths~\cite{Ren2019LabelDL}.

\begin{figure*}[!ht]
  \centering
  \includegraphics[width=\linewidth]{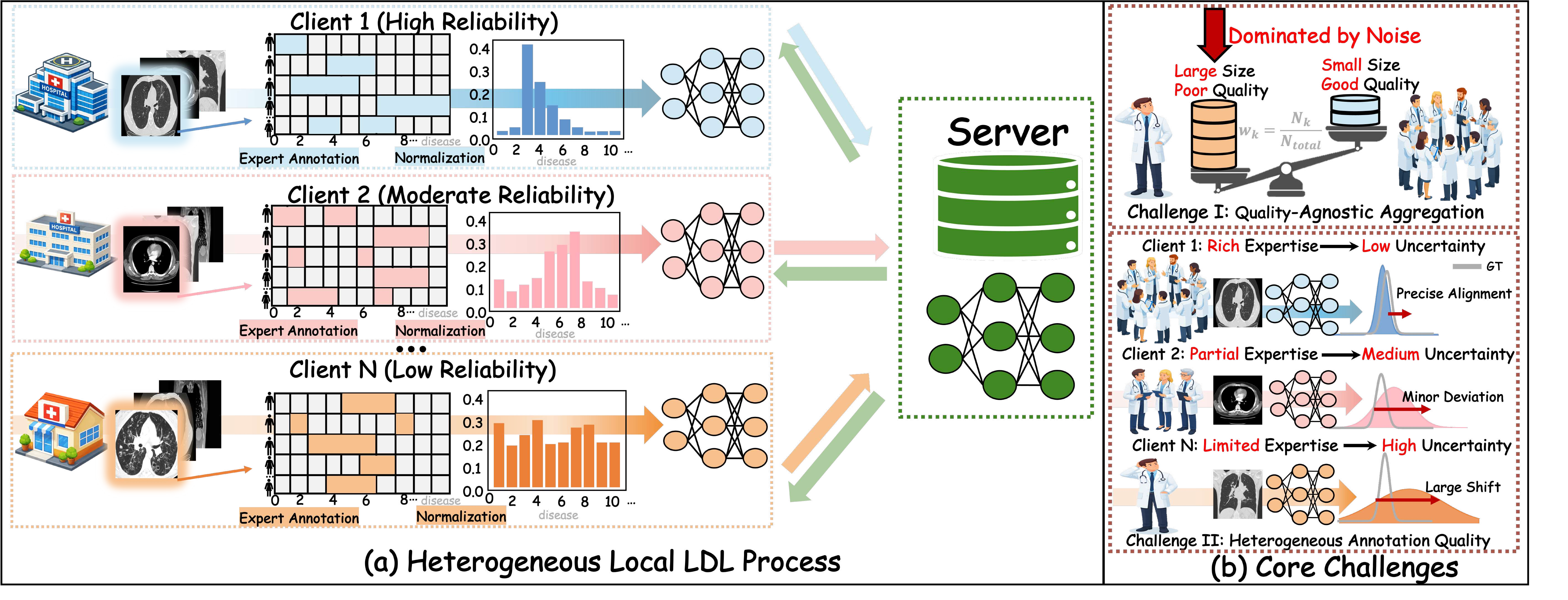}
  \caption{
   Illustration of the Federated Label Distribution Learning (Fed-LDL) framework and its inherent challenges.
  (a) The Fed-LDL Workflow: Clients with varying levels of reliability (e.g., top-tier hospitals vs. community clinics) locally optimize LDL models based on their private datasets, which exhibit diverse label distributions.
  (b) Core Challenges. 
   Challenge I (Quality-Agnostic Aggregation): The server aggregates parameters based solely on sample size, allowing noisy clients with large data volumes to dominate the global model.
 Challenge II (Heterogeneous Annotation Quality): The intrinsic disparity in annotator expertise leads to inconsistent optimization targets, causing local models on low-quality clients to suffer from a ``Large Shift'' away from the precise ground truth.
  }
  \label{fig1}
\end{figure*}

Given its applications in sensitive domains, the research community has pivoted toward \textit{Federated Label Distribution Learning (Fed-LDL)}. 
As illustrated in Figure~\ref{fig1}(a), this paradigm enables \textbf{collaborative training} by keeping raw data on local devices and only transmitting model updates. 
However, such an architecture significantly exacerbates the previously noted precision bottlenecks. Due to \textbf{data isolation} on local devices \cite{yangfl3} \cite{yangfl4}, the expertise of annotators naturally varies across different clients, resulting in heterogeneous label noise levels. This introduces a unique set of obstacles to the standard Fed-LDL paradigm, which we categorize into \textbf{three key challenges}:

\textcolor{red}{Challenge \ding{115}: Annotation Quality Heterogeneity (AQH).}
As depicted in Figure~\ref{fig1}(b), the quality of data annotations varies significantly across clients, compounded by the challenge of inconsistent data distributions (non-IID). 
This raises a critical question: how can we leverage such AQH data to enhance local model performance while strictly adhering to the privacy constraint of not sharing raw data?
\textcolor{red}{Challenge \ding{114}: Failure of Traditional Aggregation.}
As illustrated in the aggregation phase of Figure~\ref{fig1}(b), conventional aggregation methods, such as FedAvg \cite{fedavg}, perform parameter weighting based on the sample sizes of different clients, implicitly assuming that each client is relatively reliable. 
However, this strategy becomes counterproductive when a client with a large volume of noisy data possesses significantly higher voting power than one with a small but high-quality dataset, leading to a substantial degradation in global model performance. 
Consequently, how to effectively balance the \textbf{quantity} and \textbf{quality} of client contributions remains a formidable challenge in Fed-LDL.
\textcolor{red}{Challenge \ding{117}: Scarcity of Real-world Fed-LDL Benchmarks.}
To date, there is a lack of real-world Fed-LDL benchmarks for validating federated LDL methods under heterogeneous supervision quality, making rigorous empirical evaluation in realistic federated settings difficult.

In this paper, we propose \textit{FedQual} to explicitly address the challenges arising from heterogeneous annotation quality.
To mitigate the unreliability of local updates caused by uneven supervision quality, we incorporate a \textit{global semantic anchor} as a guiding reference, applying stronger semantic correction to low-quality clients while preserving the autonomy of high-quality ones.
To prevent unreliable updates from dominating the global model during aggregation, FedQual proposes an \textit{Effective Information Density} recalibration strategy that evaluates client contributions based on reliable information rather than raw sample size, enabling rational and quality-aware aggregation.
Next, to bridge the lack of real-world Fed-LDL datasets, we introduce four new Fed-LDL benchmarks\footnote{Benchmark construction details are provided in Section~4.} (FER-LDL, FI-LDL, PIPAL-LDL, and KADID-LDL) and conduct extensive evaluations on them. Extensive experiments on these benchmarks demonstrate the effectiveness and robustness of the proposed framework. From a theoretical perspective, we prove that, under heterogeneous supervision quality, the optimal client-specific calibration strictly outperforms any optimal uniform calibration in Fed-LDL (i.e., the uniform strategy incurs a strictly larger total risk). This result provides a principled foundation for FedQual’s quality-adaptive calibration and reliable-information-based aggregation.  Our contributions are summarized as follows:
\begin{itemize}
    \item To the best of our knowledge, this is the first work that systematically studies Fed-LDL: we identify its key challenges under heterogeneous supervision quality and propose a principled solution to resolve the resulting trust dilemma.

    \item We construct and release four Fed-LDL benchmarks (FER-LDL, FI-LDL, PIPAL-LDL, and KADID-LDL) with controlled annotation quality disparity, enabling rigorous and reproducible evaluation under heterogeneous supervision.

    \item From a theoretical perspective, we prove that under heterogeneous supervision quality, client-specific calibration is strictly better than any uniform calibration in Fed-LDL, providing a principled justification for our quality-adaptive design.
\end{itemize}

%% file: sections/2_method.tex
\section{Methodology}
\label{sec:method}

\subsection{Notation} 
We consider a Fed-LDL system with $M$ clients $\mathcal{U}=\{u_1,\dots,u_M\}$. 
Client $u_m$ holds $\mathcal{D}_m=\{(\mathbf{x}_{m,j},\mathbf{d}_{m,j})\}_{j=1}^{N_m}$, where $\mathbf{d}_{m,j}\in\Delta^{C}$ is a label distribution ($\mathbf{d}_{m,j}\succeq \mathbf{0}$, $\|\mathbf{d}_{m,j}\|_1=1$). 
Let $f(\cdot;\mathbf{w}):\mathcal{X}\to\Delta^{C}$ be the model. 
At communication round $t$, client $u_m$ maintains a local model $f(\cdot;\mathbf{w}_m^{t})$ with parameters $\mathbf{w}_m^{t}$; given an input $\mathbf{x}$, it outputs logits $\mathbf{z}_m^t(\mathbf{x}) \in \mathbb{R}^C$ and the predicted label distribution is $\mathbf{p}_m^{t}(\mathbf{x})=\mathrm{softmax}(\mathbf{z}_m^{t}(\mathbf{x}))$. We assign a \textit{Quality Indicator} $q_m \in \mathbb{R}^+$ to each client $u_m$, $q_m$ quantifies the statistical fidelity of the local dataset $\mathcal{D}_m$.

\begin{tcolorbox}[
    colback=yellow!10,      
    colframe=black!70,     
    boxrule=0.8pt,          
    arc=3pt,                
    left=6pt, right=6pt, top=6pt, bottom=6pt, 
    sharp corners=south,    
]
    \textbf{Overview.} To address the two challenges in Section 1, we propose FedQual, which mitigates the trust dilemma in Fed-LDL from both the client and server sides: (i) client-side trust rectification via quality-adaptive client training under the GSA to avoid unreliable update directions; and (ii) server-side trust filtering via progressively reweighting clients by effective reliable information to prevent “large-but-noisy” clients from dominating aggregation.
    
    \vspace{0.5em}

\end{tcolorbox}

\subsection{Can We Trust Local client Updates in Fed-LDL?}
\label{subsec:anchor_definition}

As discussed in Section \ref{sec:intro}, Fed-LDL faces a trust dilemma. Beyond Non-IID data, clients exhibit annotation quality heterogeneity, so local gradients from low-quality clients may be unreliable. Using the server model for correction is also risky: when the server distribution differs greatly from a client’s data, the “corrected” gradient can be biased. Hence, a core problem in Fed-LDL is to find a trustworthy update direction for each client.

To address this issue, we introduce a new concept in the local-client training stage, namely the Global Semantic Anchor (GSA), which is used to appropriately calibrate different clients. The rationale for this design lies in the \textit{ensemble consensus theory}~\cite{GSA_1, GSA_4}: analogous to Bayesian model ensembling~\cite{GSA_3}, aggregating diverse local models cancels out idiosyncratic annotation noise while reinforcing shared semantic patterns. Functioning as a robust supervisor grounded in this consensus, the GSA intervenes only when a local client deviates from the generalized update direction. We next define the GSA as follows.
\begin{equation}
\label{eq:gsa_def}
    \mathcal{A}(\mathbf{x}) \triangleq \mathbf{z}(\mathbf{x}; \mathbf{w}_g^{t}), 
    \quad \text{with } \mathbf{w}_g^{t} = \mathrm{Agg}(\mathbf{w}_1^{t-1}, \dots, \mathbf{w}_M^{t-1}).
\end{equation}
where $\mathcal{A}(\cdot)\in\mathbb{R}^{C}$ denotes the Global Semantic Anchor in \emph{logits space}, instantiated as the global-model logits $\mathbf{z}(\mathbf{x};\mathbf{w}_g^{t})$ on input $\mathbf{x}$. Here, $\mathbf{w}_g^{t}$ is the global model parameter at round $t$, obtained by aggregating the client models from the previous round, i.e., $\mathbf{w}_g^{t}=\mathrm{Agg}(\mathbf{w}_1^{t-1},\ldots,\mathbf{w}_M^{t-1})$, and then broadcast to all clients at the start of round $t$.

\subsection{Take the Strengths, Compensate the Weaknesses}

In Fed-LDL, clients have different supervision accuracy levels, denoted by $q_m$. Our principle is simple: \emph{let accurate clients be themselves, and help inaccurate clients more}. Specifically, when $q_m$ is large, the local signal is trustworthy and the GSA should intervene less to preserve client autonomy; when $q_m$ is small, the local signal is unreliable and the GSA should step in more aggressively to calibrate the update direction by aligning the client output with the anchor. To express \emph{different degrees of alignment} in a more principled way, we formulate each client’s local training as a learning and calibration joint objective:
\begin{equation}
\label{eq:learn_correct_joint}
\begin{aligned}
&\min_{\mathbf{w}_m}\ \mathbb{E}_{\mathbf{x}\sim \mathcal{D}_m}\Big[
(1-\alpha_m)\,
\underbrace{\ell\!\left(\mathbf{d}_m(\mathbf{x}),\, f(\mathbf{x};\mathbf{w}_m)\right)}_{\text{local learning}} \\
&\quad +\ \alpha_m\,
\underbrace{\mathcal{R}\!\left(\mathcal{A}(\mathbf{x}),\, \mathbf{z}_m(\mathbf{x};\mathbf{w}_m)\right)}_{\text{anchor calibration}}
\Big],
\end{aligned}
\end{equation}
where $\mathbf{w}_m$ denotes the parameters of client $m$, and $\mathcal{D}_m$ is its local dataset (with the expectation taken over the empirical distribution). 
$\mathbf{d}_m(\mathbf{x})=[d_{m,1}(\mathbf{x}),\ldots,d_{m,C}(\mathbf{x})]^\top$ is the label-distribution supervision on client $m$, and $f(\mathbf{x};\mathbf{w}_m)$ denotes the client prediction (a probability distribution after softmax). 
The first term (\emph{local learning}) fits the model to the local label distribution via the LDL loss $\ell(\cdot,\cdot)$ (e.g., $\mathrm{KL}(\mathbf{d}_m(\mathbf{x})\,\|\,f(\mathbf{x};\mathbf{w}_m))$). 
The second term (\emph{anchor calibration}) uses the Global Semantic Anchor $\mathcal{A}(\mathbf{x})$ to rectify unreliable local updates by aligning the client logits $\mathbf{z}_m(\mathbf{x};\mathbf{w}_m)$ with the anchor in logits space through $\mathcal{R}(\cdot,\cdot)$. 
And $\alpha_m$ is defined as 
 \begin{equation}
\label{eq:alpha_q_nonlinear}
\left\{
\begin{aligned}
&\alpha_m 
\triangleq \sigma\!\Big(\beta\big(\lambda(q_m)-\lambda_0\big)\Big),\\
&\lambda(q_m) 
= \max\!\left(0,\,1-\frac{q_m}{\tau}\right).
\end{aligned}
\right.
\end{equation}
where $\sigma(\cdot)$ is the sigmoid function, $\beta>0$ controls the sharpness of the transition, and $\lambda_0$ is an intervention offset. Since $\lambda(q_m)$ decreases with $q_m$, $\alpha_m$ becomes larger for low-quality clients (small $q_m$), yielding stronger anchor calibration; meanwhile, $\alpha_m$ stays small for high-quality clients, so the objective is dominated by local learning. In this way, the $m$-th client is trained by \emph{learning + correction} jointly, with the correction strength automatically adapted to its supervision accuracy.

\subsection{Trust First, Then Scale: Quality-to-Quantity Annealed Aggregation}
\label{subsec:prog_agg}

This trust issue also breaks standard aggregation: FedAvg-style sample-size weighting is unreasonable when large clients have low-quality annotations, allowing unreliable updates to dominate the global model. This is \emph{stage-dependent}: early on, the anchor and calibration are still unstable, so the server should emphasize \emph{quality-aware} reweighting to suppress ``large-but-noisy’’ clients; later, as the anchor stabilizes and local updates become more trustworthy, aggregation can gradually shift toward \emph{quantity-aware} weighting for statistical efficiency. Motivated by this, we propose a progressive aggregation strategy that anneals from quality-dominated to quantity-dominated weighting. We first define the \emph{effective reliable information} of client $m$ as
\begin{equation}
\label{eq:eid}
S_m \triangleq N_m \cdot q_m,
\end{equation}
where $S_m$ denotes the effective reliable information provided by client $m$. 
To enable a smooth transition over rounds, we introduce a round-dependent \emph{trust annealing} factor $\rho_t\in[0,1]$ and define an intermediate score
\begin{equation}
\label{eq:trust_score}
\widetilde{S}_m^{\,t}
\triangleq 
\Big(S_m\Big)^{1-\rho_t}
\Big(N_m\Big)^{\rho_t}
=
N_m\cdot q_m^{\,1-\rho_t},
\end{equation}
where $\rho_t$ controls the transition from quality-aware to quantity-aware weighting, and $\widetilde{S}_m^{\,t}$ is the round-$t$ score used for aggregation.
The aggregation weight is then obtained by a normalized soft weighting:
\begin{equation}
\label{eq:prog_weight_soft}
\omega_m^{t}
\triangleq
\frac{\exp\!\big(\gamma_{temp} \log \widetilde{S}_m^{\,t}\big)}
{\sum_{j=1}^{M}\exp\!\big(\gamma_{temp} \log \widetilde{S}_j^{\,t}\big)}
=
\frac{\big(\widetilde{S}_m^{\,t}\big)^{\gamma_{temp}}}
{\sum_{j=1}^{M}\big(\widetilde{S}_j^{\,t}\big)^{\gamma_{temp}}},
\end{equation}
where $\omega_m^{t}$ is the aggregation weight assigned to client $m$ at round $t$, and $\gamma_{temp}>0$ controls the sharpness (inverse-temperature) of the weighting.
Finally, the server updates the global model by
\begin{equation}
\label{eq:prog_agg}
\mathbf{w}_g^{t+1}=\sum_{m=1}^M \omega_m^{t}\,\mathbf{w}_m^{t+1},
\end{equation}
where $\mathbf{w}_g^{t+1}$ denotes the aggregated global model for the next round.
When $\rho_t=0$, we have $\widetilde{S}_m^{\,t}=S_m=N_m q_m$, yielding a fully quality-aware aggregation that strongly down-weights low-quality clients even if they have large $N_m$. When $\rho_t\to 1$, $\widetilde{S}_m^{\,t}\to N_m$, so the rule smoothly recovers sample-size weighting, which becomes appropriate once anchor-based calibration stabilizes and local updates are more trustworthy. In practice, $\rho_t$ can be implemented as an increasing schedule (e.g., linear warm-up) to reflect the progressively improved reliability of the anchor.

\begin{figure*}[!ht]
    \centering
    \includegraphics[width=\textwidth]{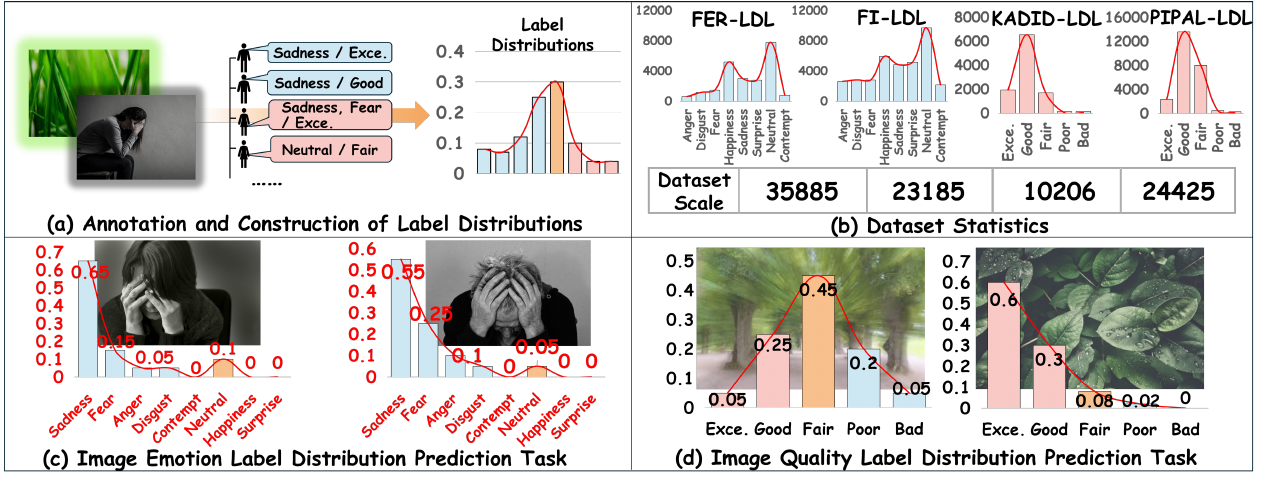}
    \caption{
    \textbf{Overview of our Fed-LDL benchmarks.}
    (a) 10-expert ensemble annotation and label-distribution construction, yielding the latent ground-truth distribution $\mathbf{d}_{GT}$.
    (b) Dataset statistics, including dataset scale and aggregated label-distribution profiles for FER-LDL, FI-LDL, KADID-LDL, and PIPAL-LDL.
    (c)  A qualitative emotion-recognition example with an 8-class distribution, illustrating cognitive ambiguity.
    (d)  A qualitative IQA example with a 5-bin score distribution, illustrating inherent subjectivity.
    }
    \label{fig:benchmark}
    \vspace{-0.6em}
\end{figure*}

%% file: sections/3_theoretical.tex
\section{Theoretical Analysis}
\label{sec:theory}
Recall that our local update combines \emph{local learning} and \emph{anchor calibration} with a client-specific calibration weight.\footnote{In Sec.~2, this weight is denoted by $\alpha_m$; in this section we use $\lambda$ to denote the same intervention strength for analysis convenience, i.e., $\lambda \equiv \alpha$.}
This design directly matches the trust dilemma in Fed-LDL: clients have heterogeneous supervision quality, so the reliability of their local updates varies substantially. A natural question is whether we can instead use a \emph{uniform} calibration strength for all clients. We show that, under such heterogeneity, treating all clients equally is provably suboptimal, thereby justifying the effectiveness of our quality-adaptive modulation used in FedQual.

\begin{theorem}[Adaptive calibration is strictly better than uniform calibration]
\label{thm1}
For client $m$, consider the expected excess risk $\mathcal{R}_m(\lambda)$ of applying anchor calibration with strength $\lambda\in[0,1]$ under the local quadratic surrogate in Eq.~\eqref{eq:bias_var}. 
Define
$\mathcal{J}_{\text{adapt}} \triangleq \sum_{m=1}^M \min_{\lambda_m\in[0,1]} \mathcal{R}_m(\lambda_m)$ and 
$\mathcal{J}_{\text{uni}} \triangleq \min_{\bar{\lambda}\in[0,1]} \sum_{m=1}^M \mathcal{R}_m(\bar{\lambda})$.
If the optimal trade-offs are not identical across clients, i.e., there exist $m\neq n$ such that $\lambda_m^\star \neq \lambda_n^\star$ (with $\lambda_m^\star$ given in Eq.~\eqref{eq:lambda_star}), 
then the adaptive strategy achieves strictly smaller total risk:
\begin{equation}
\label{eq:strict_gap}
\mathcal{J}_{\text{adapt}} < \mathcal{J}_{\text{uni}}.
\end{equation}
\end{theorem}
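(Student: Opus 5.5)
The surrogate in Eq.~\eqref{eq:bias_var} is referenced but not reproduced above, so I assume it is the standard bias--variance quadratic in $\lambda$. Write it as $\mathcal{R}_m(\lambda)=(1-\lambda)^2 a_m+\lambda^2 b_m$. Here $a_m>0$ is the noise-induced risk of the local signal, which grows as $q_m$ decreases, and $b_m>0$ is the anchor bias, which grows with the client's distribution shift. Under this form each $\mathcal{R}_m$ is a strictly convex quadratic on $[0,1]$. Its minimizer is $\lambda_m^\star=a_m/(a_m+b_m)\in(0,1)$, which should match Eq.~\eqref{eq:lambda_star}. Completing the square gives
\[
\mathcal{R}_m(\lambda)=\mathcal{R}_m(\lambda_m^\star)+c_m(\lambda-\lambda_m^\star)^2,\qquad c_m=a_m+b_m>0 .
\]
I will use this decomposition directly. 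If the actual surrogate has a different form, I only need it to be strictly convex and quadratic with a positive leading coefficient $c_m$, together with the stated minimizer.

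Summing over clients, for any uniform $\bar\lambda$ we get
\[
\sum_m \mathcal{R}_m(\bar\lambda)=\mathcal{J}_{\text{adapt}}+\sum_m c_m(\bar\lambda-\lambda_m^\star)^2 .
\]
The first term equals $\mathcal{J}_{\text{adapt}}$ because the interior minimizers are feasible, so the inner $\min$ over $[0,1]$ is attained at $\lambda_m^\star$. This identity immediately gives $\mathcal{J}_{\text{uni}}\ge \mathcal{J}_{\text{adapt}}$. It also shows that the uniform problem is itself a strictly convex quadratic in $\bar\lambda$. Its unconstrained minimizer is the weighted mean $\bar\lambda^\star=\sum_m c_m\lambda_m^\star/\sum_m c_m$, a convex combination of points in $[0,1]$ and therefore feasible.

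For strictness, it is enough to show that the penalty $P(\bar\lambda)=\sum_m c_m(\bar\lambda-\lambda_m^\star)^2$ is strictly positive at every $\bar\lambda$; attainment on the compact interval then transfers this to the minimum. Suppose $P(\bar\lambda)=0$. Because every $c_m>0$, this forces $\bar\lambda=\lambda_m^\star$ for all $m$, contradicting the assumption that $\lambda_m^\star\ne\lambda_n^\star$ for some $m\ne n$. So the minimum of $P$ over $[0,1]$ is strictly positive, and $\mathcal{J}_{\text{uni}}>\mathcal{J}_{\text{adapt}}$. The gap can also be written explicitly as the weighted variance $\sum_m c_m(\bar\lambda^\star-\lambda_m^\star)^2$.

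I expect the main obstacle to be the first step: establishing that the surrogate has strictly positive curvature $c_m$ and that $\lambda_m^\star$ lies in $[0,1]$. If some $b_m$ or $a_m$ is zero, the minimizer lands at an endpoint, which is still fine as long as $c_m>0$. But if Eq.~\eqref{eq:bias_var} allowed degenerate clients with $c_m=0$, those clients would need to be excluded, or the nondegenerate pair $m,n$ would have to carry the strictness. I would handle this by noting that $m$ and $n$ have well-defined distinct minimizers only when their curvatures are positive.
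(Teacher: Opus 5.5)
Your proposal is correct and follows essentially the same route as the paper. The paper uses the surrogate $\mathcal{R}_m(\lambda)=(1-\lambda)^2\sigma_m^2+\lambda^2\delta_m^2+C$ (so your $a_m=\sigma_m^2$, $b_m=\delta_m^2$), completes the square with curvature $\sigma_m^2+\delta_m^2$, sums over clients to get $\mathcal{J}_{\text{adapt}}$ plus a nonnegative weighted penalty, and gets strictness because no single $\bar{\lambda}$ can equal two distinct $\lambda_m^\star$. Your explicit handling of attainment on $[0,1]$ and of positive curvature for the distinguishing pair $m,n$ makes precise what the paper leaves implicit.
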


Theorem~\ref{thm1} proves that a \emph{uniform} calibration strength is strictly suboptimal under client heterogeneity, whereas \emph{client-specific} calibration achieves lower total risk. Practically, it prevents over-correcting high-quality clients while providing stronger rectification for low-quality ones, leading to a more stable and accurate federation. The detailed proof of Theorem \ref{thm1} is in Appendix \ref{theory}.

\noindent\textbf{Remark.}
Our method performs anchor calibration via a logits-level regularizer rather than explicitly interpolating parameters. 
For theoretical tractability, we adopt a standard surrogate view that this calibration \emph{implicitly pulls} the client solution toward the global anchor, which motivates the interpolation model in Eq.~\eqref{eq:rect_param}.

%% file: sections/4_benchmark.tex
\section{Benchmark Construction }
\label{sec:benchmark}

Existing LDL datasets typically assume uniformly reliable supervision, thus failing to reflect the \emph{annotation quality disparity} in federated scenarios; consequently, Fed-LDL lacks rigorous benchmarks to stress-test methods under heterogeneous supervision. To fill this gap, we establish \textit{four new Fed-LDL benchmarks} covering emotion recognition (FER-LDL, FI-LDL) and IQA (PIPAL-LDL, KADID-LDL) by \emph{re-annotating} existing datasets with dense label distributions~\cite{fer2013,fl,pipal,kadid10k}. We adopt a rigorous \textit{10-expert human-ensemble} protocol to build latent ground-truth distributions $\mathbf{d}_{GT}$ by aggregating and normalizing expert votes (details in the appendix). For emotion recognition, experts provide multi-hot judgments over 8 emotions and votes are aggregated into $\mathbf{d}_{GT}$; for IQA, discrete ratings are converted to vote vectors and aggregated into a 5-bin score distribution. These benchmarks provide a practical testbed for evaluating Fed-LDL under the trust dilemma.

\noindent\textit{Task description.} As illustrated in Figure~\ref{fig:benchmark}(c--d), our benchmarks cover two label-distribution prediction tasks: (i) \emph{image quality assessment (IQA)}, where the goal is to predict a 5-bin perceptual quality distribution over \{Excellent, Good, Fair, Poor, Bad\}; and (ii) \emph{facial emotion recognition}, where the goal is to predict an 8-class emotion distribution capturing the mixture of plausible emotions in the image.

\noindent\textit{Dataset Statistics and Annotation Details.} We first report the overall statistics of our four Fed-LDL benchmarks, including the dataset scale and the aggregated label-distribution profiles. Specifically, Figure~\ref{fig:benchmark}(b) summarizes the number of samples in each benchmark and visualizes the \emph{summed} label distributions (i.e., cumulative label mass over all instances). Notably, the total mass across different labels is highly imbalanced, exhibiting substantial heterogeneity among labels and across datasets, which further increases the difficulty of robust Fed-LDL under heterogeneous supervision. We then describe the annotation procedure used to construct high-fidelity latent ground-truth distributions. As shown in Figure~\ref{fig:benchmark}(a), each image is independently annotated by 10 human experts, and their votes are aggregated and normalized to form $\mathbf{d}_{GT}$ \footnote{detailed annotation guidelines and the expert protocol are deferred to the appendix.
}.

%% file: sections/5_experiments.tex
\section{Experiment}
\label{sec:exp}

\subsection{Setups}
\noindent\textbf{Baseline Methods.} We compare \textit{FedQual} against three categories of baselines. 
\textbf{(i) Optimization-Stabilization:} \textit{FedAvg}~\citep{fedavg} performs sample-size weighted averaging and assumes uniform client reliability. \textit{FedProx}~\citep{fedprox} improves it with a proximal term to restrict local model drift. \textit{(ii) Representation-Alignment:} \textit{MOON}~\citep{moon} employs contrastive loss to align local and global representations. \textit{FedRDN}~\citep{fedrdn} uses robust distance metrics to filter unreliable updates. \textit{FedGloSS}~\citep{fedgloss} generates synthetic samples to mitigate feature skew. \textit{(iii) Quality-Aware:} To disentangle the contribution of $q_m$, we introduce \textit{FedQAgg}, which applies quality-weighted aggregation at the server, and \textit{FedQRect}, which introduces client-side regularization proportional to data quality. Both serve as component-level baselines to validate the design of FedQual.

\noindent\textbf{Evaluation Metrics. }
Following the standard protocol in LDL, we employ six diverse metrics to comprehensively evaluate the alignment between the predicted and ground-truth distributions. 
These metrics are categorized into two groups:
(1)Distance Measures ($\downarrow$): \textit{Chebyshev}, \textit{Clark}, \textit{Canberra}, and \textit{Kullback-Leibler divergence}, where lower values indicate better performance; 
and (2)Similarity Measures ($\uparrow$): \textit{Cosine coefficient} and \textit{Intersection similarity}, where higher values imply greater consistency~\cite{gengldl2}.

\noindent\textbf{Non-IID and Noise Simulation Settings.} To simulate realistic federated environments, we construct data heterogeneity along two axes: label distribution skew and annotation quality variation.
\textit{Non-IID Partitioning.} We adopt a two-stage strategy to induce label distribution non-IIDness. First, we discretize the soft ground-truth $d_{GT}$ into sparse multi-label distributions using a Top-$K$ operation, retaining only the top $K$ labels per sample ($K \in \{1,2,3,4\}$). Then, client partitions are created based on a Dirichlet distribution with concentration parameter $\gamma \in \{0.25, 0.5, 0.75, 1.0\}$ to ensure heterogeneous label support across clients.
\textit{Annotation Noise Injection.} We simulate annotation quality heterogeneity in two ways. (1) To degrade local label fidelity, we vary the number of annotators used to generate each label distribution; fewer annotators yield higher uncertainty and thus lower quality scores $q_m$. (2) To introduce global noise imbalance, we control the proportion of low-quality clients $\rho_{noise} \in \{0.25, 0.5, 0.75\}$ by assigning noisy supervision patterns to a subset of clients. This dual noise modeling reflects both intra-client and inter-client annotation variability.

\subsection{Results and Findings}

\noindent\textbf{Comparative Analysis with State-of-the-Art.}
The comparative results are summarized in Table~\ref{tab:main_results}. From the table, we can draw the following conclusions:
\begin{itemize}[leftmargin=*,itemsep=2pt,topsep=2pt]
    \item FedQual delivers the best overall performance across four benchmarks under six metrics, consistently outperforming both quality-agnostic FL baselines (e.g., FedAvg/FedProx/MOON) and quality-aware competitors (e.g., FedGloSS/FedQAgg/FedQRect).

    \item  FedQual achieves the lowest KL divergence on FER-LDL and FI-LDL (0.2528 and 0.4085, respectively) and the highest Cosine similarity, indicating strong robustness to heterogeneous annotation noise in federated emotion recognition.

    \item  FedQual maintains clear advantages on IQA, reaching a KL as low as 0.0908 on KADID-LDL and achieving Cosine similarity above 0.96, demonstrating reliable distribution learning under intrinsic subjectivity.

    \item  The improvements hold for both distance-based criteria (KL/Chebyshev/Clark/Canberra) and similarity-based criteria (Intersect/Cosine), suggesting that FedQual enhances not only pointwise accuracy but also the overall shape alignment of predicted label distributions.

    \item  Compared with methods that focus on only one side (either local robustness or aggregation), FedQual’s \emph{client-side quality-adaptive rectification} and \emph{server-side reliability-aware aggregation} work synergistically, which is crucial for resolving the Fed-LDL trust dilemma.
\end{itemize}

\begin{table*}[!ht]
\centering
\caption{Performance comparison on four LDL benchmarks. 
$\downarrow$ indicates lower is better, while $\uparrow$ indicates higher is better. 
The \cbest{best} results are highlighted in blue and the \csecond{second-best} results in green. 
Our method, FedQual, consistently achieves competitive performance.}
\label{tab:main_results}

\small 
\setlength{\tabcolsep}{8pt} 
\renewcommand{\arraystretch}{1.15} 

\begin{tabular}{llcccccc} 
\toprule
\textbf{Dataset} & \textbf{Method} & 
\textbf{KL} $\downarrow$ & \textbf{Chebyshev} $\downarrow$ & \textbf{Clark} $\downarrow$ & 
\textbf{Canberra} $\downarrow$ & \textbf{Intersect} $\uparrow$ & \textbf{Cosine} $\uparrow$ \\
\midrule

\multirow{8}{*}{\textbf{FER-LDL}} 
& FedAvg      & 0.3420 & 0.2134 & 1.3230 & 3.0968 & 0.6961 & 0.8162 \\
& FedProx     & 0.3007 & 0.1956 & \second{1.2742} & 2.9578 & 0.7135 & 0.8414 \\
& FedRDN      & 0.3593 & 0.2161 & 1.3565 & 3.2050 & 0.6840 & 0.8057 \\
& MOON        & 0.3782 & 0.2329 & 1.3294 & 3.1303 & 0.6745 & 0.7947 \\
& FedGloSS    & 0.3155 & 0.1982 & 1.3647 & 3.2116 & 0.7137 & 0.8414 \\
& FedQAgg     & 0.2756 & 0.1775 & 1.2942 & 2.9925 & 0.7329 & 0.8603 \\
& FedQRect    & \second{0.2638} & \second{0.174} & 1.2812 & \second{2.9513} & \second{0.7385} & \second{0.8678} \\
& \textbf{FedQual} & \best{0.2528} & \best{0.1716} & \best{1.2603} & \best{2.8874} & \best{0.7447} & \best{0.8739} \\
\midrule

\multirow{8}{*}{\textbf{FI-LDL}} 
& FedAvg      & \second{0.4233} & \second{0.2283} & 1.8579 & \second{4.5126} & \second{0.6673} & \second{0.8102} \\
& FedProx     & 0.5119 & 0.2573 & 1.8655 & 4.5795 & 0.6194 & 0.7873 \\
& FedRDN      & 0.5434 & 0.2733 & \second{1.8560} & 4.5758 & 0.5919 & 0.7660 \\
& MOON        & 0.4493 & 0.2306 & 1.8574 & 4.5259 & 0.6516 & 0.8025 \\
& FedGloSS    & 0.4843 & 0.2414 & 1.8599 & 4.5235 & 0.6385 & 0.7939 \\
& FedQAgg     & 0.5451 & 0.2514 & 1.8769 & 4.5897 & 0.6368 & 0.7922 \\
& FedQRect    & 0.482 & 0.2352 & 1.8688 & 4.5429 & 0.6533 & 0.8022 \\
% Our Method
& \textbf{FedQual} & \best{0.4085} & \best{0.2167} & \best{1.8545} & \best{4.4802} & \best{0.6785} & \best{0.8173} \\
\midrule
\multirow{8}{*}{\textbf{KADID-LDL}} 
& FedAvg      & 0.1248 & 0.1435 & 1.1949 & 2.7169 & 0.8532 & 0.9547 \\
& FedProx     & \second{0.1090} & 0.1391 & 1.2277 & \second{2.6052} & 0.8560 & 0.9600 \\
& FedRDN      & 0.1176 & 0.1393 & 1.3334 & 2.6642 & 0.8521 & 0.9604 \\
& MOON        & 0.4262 & 0.1492 & 1.5034 & 2.6538 & 0.8414 & 0.9446 \\
& FedGloSS    & 0.1301 & 0.1445 & 1.4998 & 2.6884 & 0.8501 & 0.9537 \\
& FedQAgg     & 0.1140 & \second{0.1316} & \best{0.9655} & 2.6231 & \second{0.8656} & \second{0.9607} \\
& FedQRect    & 0.1155 & 0.1445 & \second{0.9924} & 2.6413 & 0.8522 & 0.9559 \\
& \textbf{FedQual} & \best{0.0908} & \best{0.1201} & 1.0682 & \best{2.5718} & \best{0.8771} & \best{0.9672} \\
\midrule
\multirow{8}{*}{\textbf{PIPAL-LDL}} 
& FedAvg      & 0.2521 & 0.1676 & 1.3976 & 2.5095 & 0.8156 & 0.9292 \\
& FedProx     & \second{0.1203} & \second{0.1410} & 1.3990 & \second{2.4085} & \second{0.8436} & \second{0.9519} \\
& FedRDN      & 0.2186 & 0.1966 & 1.4334 & 2.6590 & 0.7680 & 0.9063 \\
& MOON        & 0.1282 & 0.1457 & 1.4151 & 2.4547 & 0.8406 & 0.9469 \\
& FedGloSS    & 0.1393 & 0.1538 & 1.4077 & 2.4273 & 0.8322 & 0.9410 \\
& FedQAgg     & 0.4398 & 0.1660 & \second{1.3958} & 2.4859 & 0.8151 & 0.9287 \\
& FedQRect    & 0.1543 & 0.1659 & 1.3993 & 2.4814 & 0.8142 & 0.937 \\
& \textbf{FedQual} & \best{0.1162} & \best{0.1389} & \best{1.3906} & \best{2.3868} & \best{0.8442} & \best{0.9580} \\
\bottomrule
\end{tabular}
\end{table*}

\noindent\textbf{Ablation Study.}
\label{exp::ablation}
We conduct ablations on FER-LDL and PIPAL-LDL (Table~\ref{tab:ablation}) to evaluate the individual contribution of each component within the FedQual framework. Starting from FedAvg: \textbf{+A} adds the client-side rectifier, and \textbf{+A+B} further adds server-side reweighting. We observe:
\begin{itemize}[leftmargin=*,itemsep=1pt,topsep=1pt]
    \item \textbf{+A} often improves over FedAvg on both datasets, confirming the benefit of anchor-guided local calibration.
    \item \textbf{+A+B} further improves over \textbf{+A} and achieves the best overall performance, showing the complementarity between rectification and reliability-aware aggregation.
    \item On PIPAL-LDL, the full model reduces KL from 0.252 to 0.116 and increases Cosine from 0.929 to 0.958.
\end{itemize}

\begin{table}[h!]
\centering
\caption{Ablation study on FER-LDL and PIPAL-LDL datasets. 
``Base'' denotes the standard FedAvg. 
``+A'' incorporates the \textit{Quality-Modulated Rectifier} to mitigate local drift. 
``+A+B'' represents the full framework.
The background colors distinguish different module configurations.
\textbf{Bold} indicates the best performance.}
\label{tab:ablation}
\small
\setlength{\tabcolsep}{4.5pt} 
\renewcommand{\arraystretch}{1.5}

\begin{tabular}{l  ccc  ccc} 
\toprule
\multirow{2}{*}{\textbf{Metric}} & 
\multicolumn{3}{c|}{\textbf{FER-LDL}} & 
\multicolumn{3}{c}{\textbf{PIPAL-LDL}} \\
\cmidrule(lr){2-4} \cmidrule(lr){5-7}

 & Base & \cellcolor{colB}+A & \cellcolor{colA}+A+B 
 & Base & \cellcolor{colB}+A & \cellcolor{colA}+A+B \\
\midrule
KL $\downarrow$ 
& 0.342 & \cellcolor{colB}0.264 & \cellcolor{colA}\bb{0.253}  
& 0.252 & \cellcolor{colB}0.154 & \cellcolor{colA}\bb{0.116} \\ 

Cheb $\downarrow$ 
& 0.213 & \cellcolor{colB}0.174 & \cellcolor{colA}\bb{0.172} 
& 0.168 & \cellcolor{colB}0.166 & \cellcolor{colA}\bb{0.139} \\

Clark $\downarrow$ 
& 1.323 & \cellcolor{colB}1.281 & \cellcolor{colA}\bb{1.260} 
& 1.398 & \cellcolor{colB}1.399 & \cellcolor{colA}\bb{1.391} \\

Canb $\downarrow$ 
& 3.097 & \cellcolor{colB}2.951 & \cellcolor{colA}\bb{2.887} 
& 2.510 & \cellcolor{colB}2.481 & \cellcolor{colA}\bb{2.387} \\

Inter $\uparrow$ 
& 0.696 & \cellcolor{colB}0.739 & \cellcolor{colA}\bb{0.745} 
& 0.816 & \cellcolor{colB}0.814 & \cellcolor{colA}\bb{0.844} \\

Cos $\uparrow$ 
& 0.816 & \cellcolor{colB}0.868 & \cellcolor{colA}\bb{0.874} 
& 0.929 & \cellcolor{colB}0.937 & \cellcolor{colA}\bb{0.958} \\

\bottomrule
\end{tabular}
\end{table}

\textbf{Robustness to Annotation Quality Heterogeneity.}   We evaluate the robustness of FedQual against two dimensions of annotation quality degradation: reduced label fidelity and increased prevalence of noisy clients. As shown in Figure~\ref{fig:robustness_quality}, we first vary the quality indicator $q_m$ to simulate scenarios with fewer annotators per client. Despite this degradation in supervision, FedQual maintains consistently low KL divergence and high Intersection scores across all four benchmarks, demonstrating strong resilience to noise intensity. We further examine the effect of increasing the ratio of low-quality clients ($\rho_{noise} \in \{0.25, 0.5, 0.75\}$). Results show that even when 75\% of clients are noisy, the performance of FedQual remains remarkably stable across all evaluation metrics. These findings confirm that FedQual is robust to both localized label noise and global client imbalance.

\begin{figure}[h!]
  \centering
  \includegraphics[width=\linewidth]{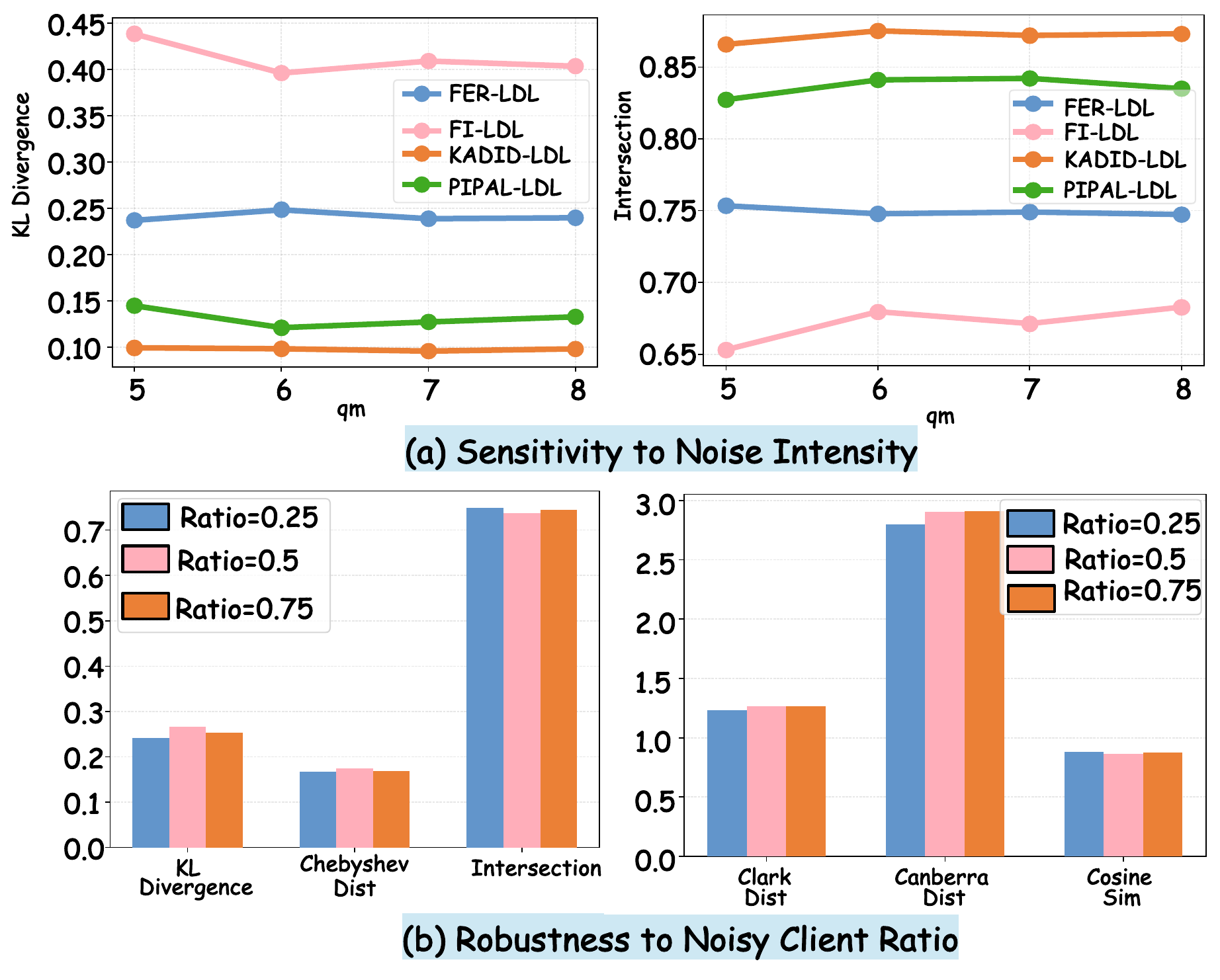} 
  \caption{
  Robustness analysis against Annotation Quality Heterogeneity. 
  \textbf{(a) Sensitivity to Noise Intensity:} The impact of varying the quality indicator $q_m \in \{5, 6, 7, 8\}$ on the four benchmarks. The flat curves indicate that FedQual maintains high performance regardless of the local noise variance.
  \textbf{(b) Impact of Noisy Client Ratio:} Performance comparison under different ratios of low-quality clients ($\rho_{noise} \in \{0.25, 0.50, 0.75\}$). FedQual exhibits strong resistance to noise domination, sustaining stable metrics even when 75\% of clients are noisy.
  }
  \label{fig:robustness_quality}
\end{figure}

\textbf{Label Distribution Skew Robustness.} We assess the robustness of FedQual under varying levels of label distribution skew, controlled by the Dirichlet parameter $\gamma \in \{0.25, 0.5, 0.75, 1.0\}$ on the FER-LDL benchmark (Figure~\ref{gamma}). Lower $\gamma$ values induce more severe inter-client label imbalance. Despite this, FedQual exhibits stable performance across all metrics. Specifically, metrics such as KL divergence and Cosine similarity remain nearly unchanged as $\gamma$ varies, indicating that FedQual is resilient to non-IID label distributions. This confirms its applicability in real-world settings where client data is often label-skewed.

\begin{figure}[!htb]
  \centering
  \includegraphics[width=\linewidth]{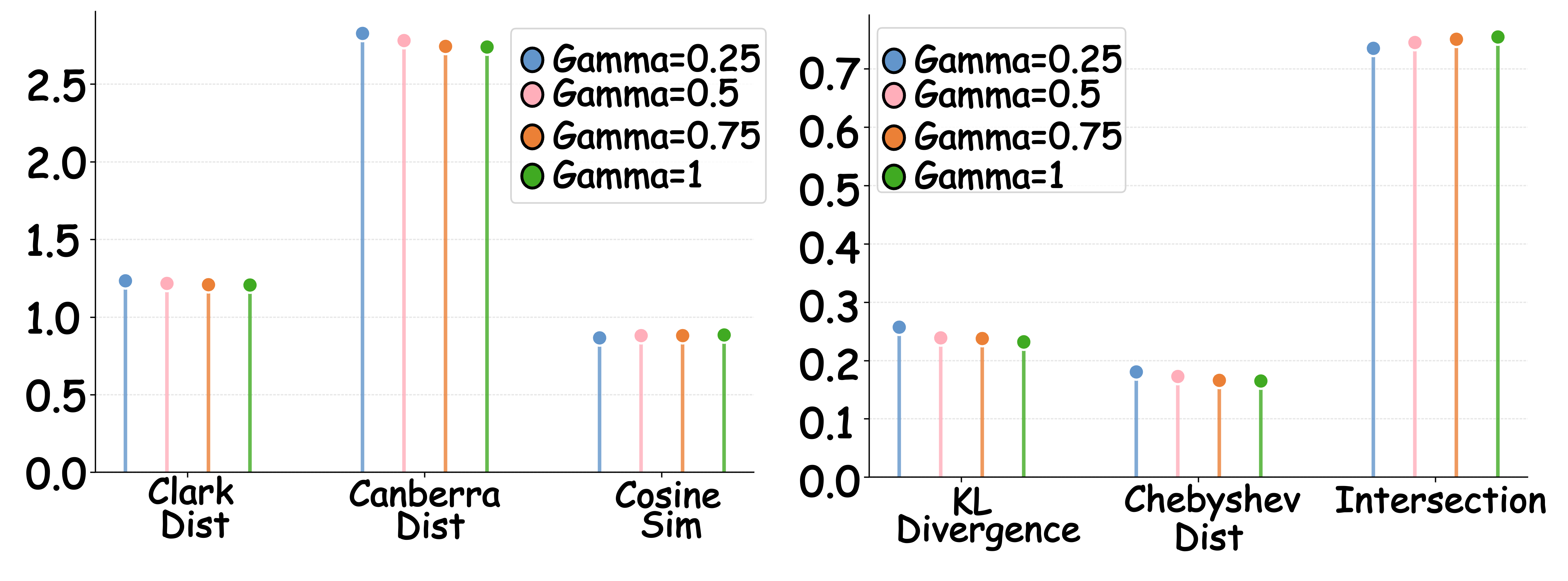} 
  \caption{
 Sensitivity analysis of the label distribution skew parameter $\gamma$ on the FER-LDL benchmark. We evaluate the performance across varying skew levels $\gamma \in \{0.25, 0.5, 0.75, 1.0\}$. The results across six metrics demonstrate that our proposed FedQual framework remains robust to different degrees of label distribution skew.
  }
  \label{gamma}
\end{figure}

\textbf{Robustness to Label Multiplicity.} To simulate challenging label heterogeneity across clients, we first degrade the original soft label distribution $d_{GT}$ into a sparse multi-label form using a Top-$K$ operation, where $K \in \{1,2,3,4\}$ determines the number of retained labels per sample. Based on these multi-label annotations, we partition data across clients such that each client supports a distinct subset of labels, thereby introducing non-IID feature distributions. As shown in Figure~\ref{topk}, FedQual maintains consistently stable performance across all six evaluation metrics, regardless of the label density. These results demonstrate that FedQual is robust to variations in label multiplicity and effectively adapts to diverse label configurations across the federation.

\begin{figure}[t]
  \centering
  \includegraphics[width=\linewidth]{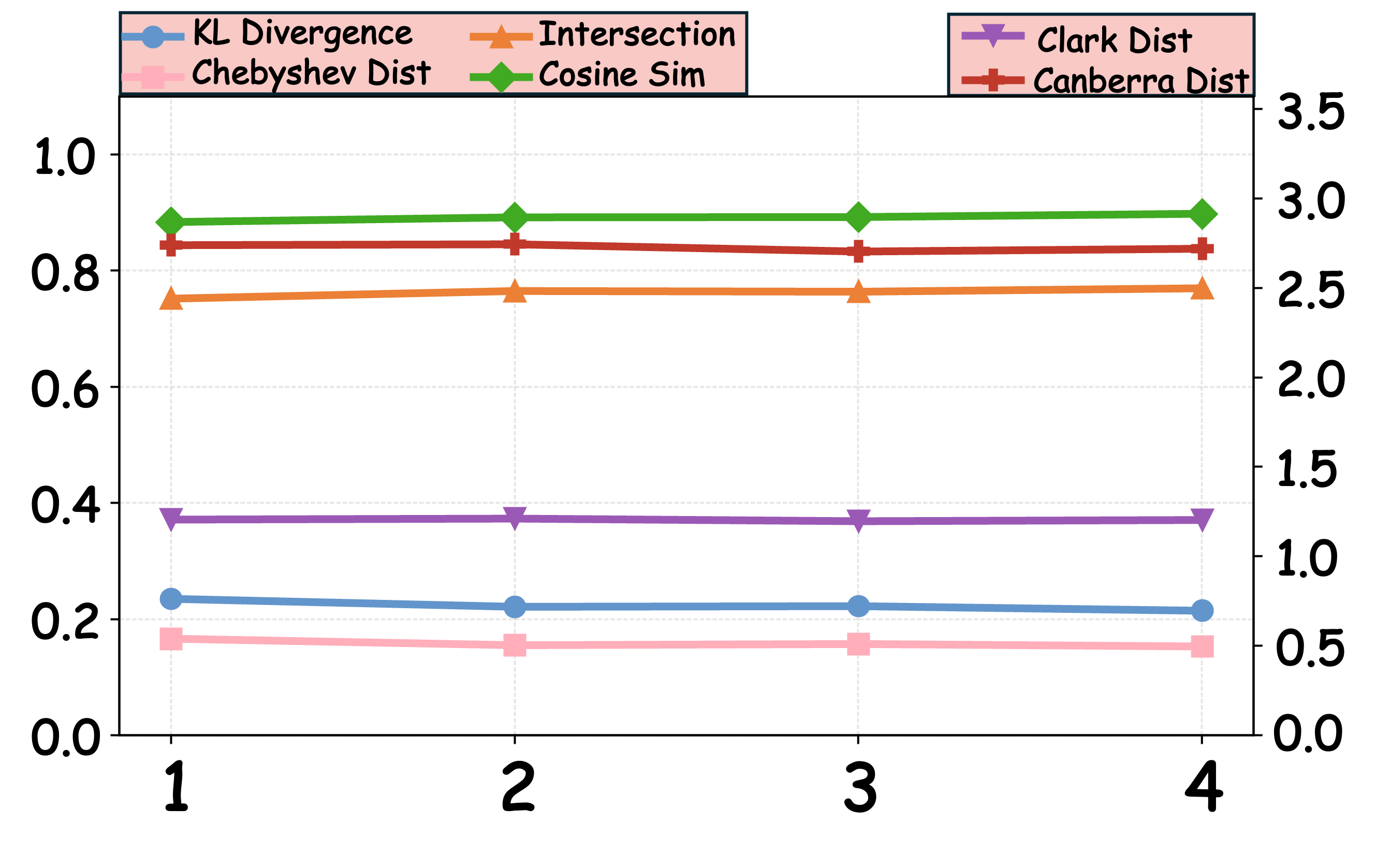} 
  \caption{
 Impact of partition label multiplicity (Top-$K$) on FER-LDL. We vary the Top-$K$ parameter ($K \in \{1, 2, 3, 4\}$) used to discretize $\mathbf{d}_{GT}$ for simulating Non-IID feature skew. The flat trends across all six metrics demonstrate that FedQual maintains robust performance regardless of the label density used for client partitioning.
  }
  \label{topk}
\end{figure}

\textbf{Scalability to Federation Size.} We evaluate the scalability of FedQual by varying the total number of participating clients in the federation from 25 to 100 on the FI-LDL benchmark. As shown in Figure~\ref{fig:robustness_poolsize}, the performance remains highly stable across all six evaluation metrics, including KL divergence, Intersection, and Cosine similarity. This indicates that FedQual can seamlessly adapt to larger federations without introducing performance degradation. The results highlight its practical applicability to real-world federated systems with a growing number of participants.

\begin{figure}[t]
  \centering
  \includegraphics[width=\linewidth]{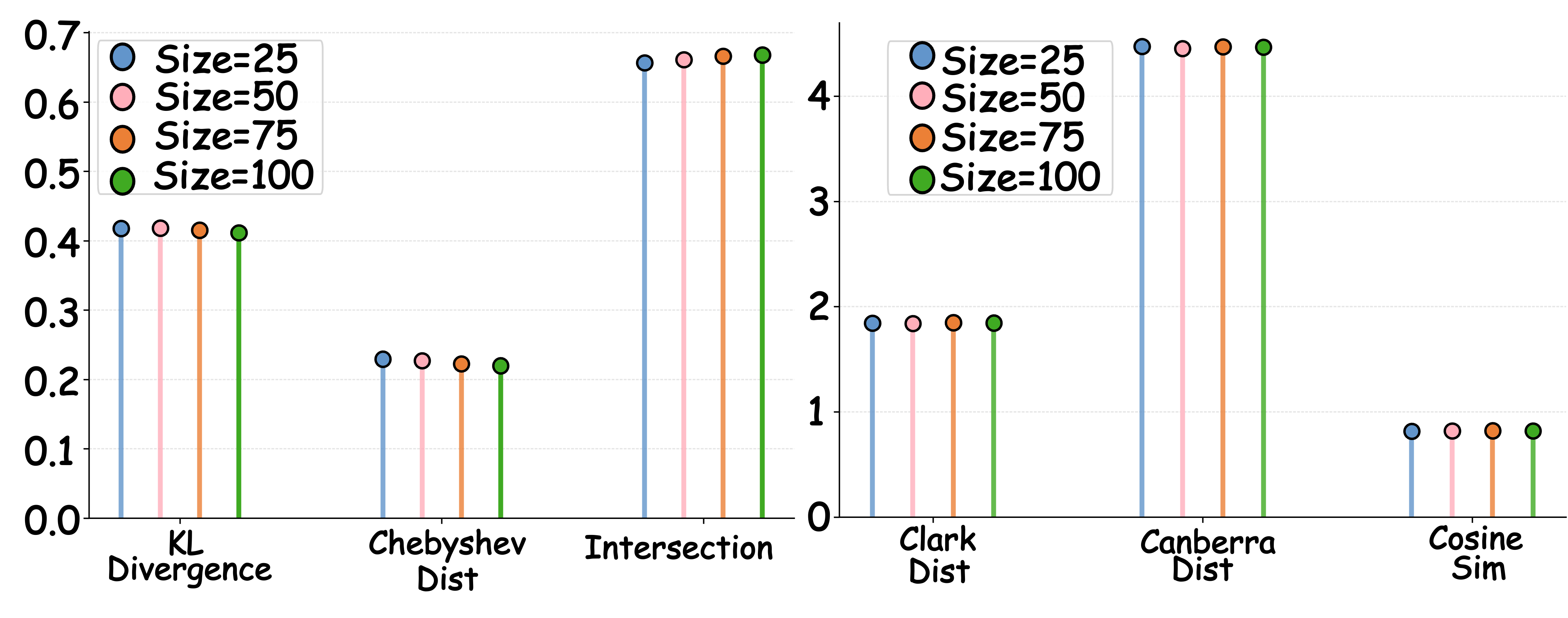} 
  \caption{
 Scalability analysis regarding the federation size on the FI-LDL benchmark. We evaluate the model performance by varying the total client pool size in the range of $\{25, 50, 75, 100\}$. The consistent results across six metrics demonstrate that FedQual scales effectively to larger federations without performance degradation.
  }
  \label{fig:robustness_poolsize}
\end{figure}

\textbf{Robustness to Partial Participation.} We evaluate the robustness of FedQual under varying client participation ratios to simulate realistic deployment scenarios with intermittent or limited client availability. Specifically, we vary the active client ratio per communication round $\rho_{online} \in \{0.2, 0.4, 0.6, 0.8\}$ on the KADID-LDL benchmark. As shown in Figure~\ref{fig:robustness_online}, FedQual achieves consistently stable performance across all six evaluation metrics, even when only 20\% of clients participate per round. The flat performance trends confirm that our framework is robust to partial participation and can maintain effectiveness in practical federated environments with unreliable or sparse client engagement.

\begin{figure}[!htb]
  \centering
  \includegraphics[width=\linewidth]{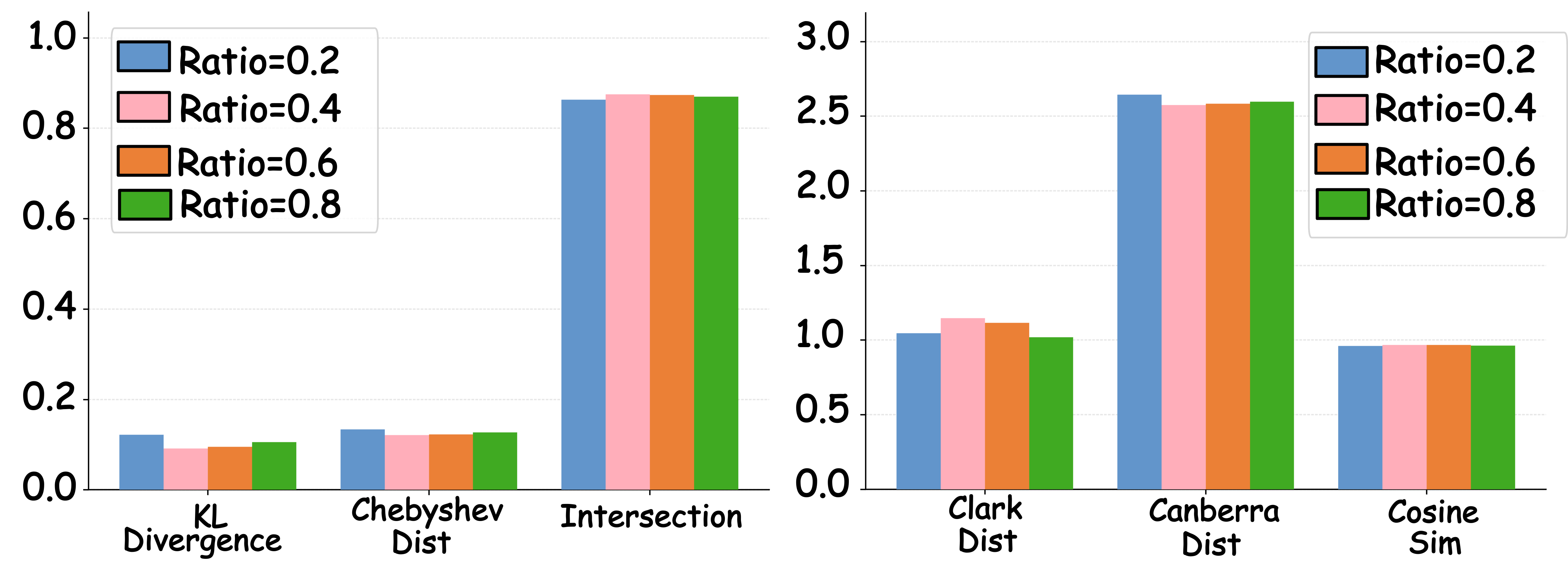} 
  \caption{
 Robustness analysis regarding the client participation ratio on KADID-LDL. We evaluate model performance by varying the ratio of active clients per round in the range of $\{0.2, 0.4, 0.6, 0.8\}$. The consistent performance across all six metrics demonstrates that our framework remains effective even under low participation rates.
  }
  \label{fig:robustness_online}
\end{figure}

%% file: sections/6_conclusion.tex
\section{Conclusion}
In this work, we tackle the critical but under-explored challenge of \textit{Annotation Quality Heterogeneity} in Fed-LDL. 
We propose FedQual, a synergistic framework that decouples robust optimization: locally, it employs a \textit{Quality-Modulated Rectifier} leveraged by a \textit{Global Semantic Anchor} to correct optimization drift; globally, it utilizes a \textit{Quality-Weighted Filter} to ensure high-fidelity knowledge drives the aggregation.
To facilitate research, we also release four benchmarks (FER-LDL, FI-LDL, PIPAL-LDL, KADID-LDL) featuring a rigorous Dual Heterogeneity simulation protocol that captures both label distribution skew and varied annotation disparity.
Extensive experiments confirm that FedQual establishes a new state-of-the-art.
\clearpage
\section*{Acknowledgements}
This research was supported by the Jiangsu Science Foundation (BG2024036, BK20243012), the National Science Foundation of China (62125602, U24A20324, 92464301), the New Cornerstone Science Foundation through the XPLORER PRIZE, and the Fundamental Research Funds for the Central Universities (2242025K30024 supported by the National). This work was Natural Science Foundation of China under Grant U24A20322 and Grant 62576094. This research work is also supported by the Big Data Computing Center of Southeast University.
\section*{Impact Statement} 
This work advances privacy-preserving semantic learning by introducing the FedQual framework and four new Fed-LDL benchmarks , which may benefit sensitive applications such as medical image assessment and mental-health analysis , but could also be misused for unauthorized surveillance or emotional profiling if applied without appropriate ethical safeguards.

%% file: sections/7_appendix.tex
\setcounter{section}{0} 
\renewcommand{\thesection}{\Alph{section}}
\renewcommand{\thesubsection}{\thesection.\arabic{subsection}}
\renewcommand{\thefigure}{\thesection.\arabic{figure}}
\renewcommand{\thetable}{\thesection.\arabic{table}}
\renewcommand{\theequation}{\thesection.\arabic{equation}}
\newcommand{\resetcounters}{
    \setcounter{figure}{0}
    \setcounter{table}{0}
    \setcounter{equation}{0}
}
\addcontentsline{toc}{part}{Appendices} 
\etocsetnexttocdepth{2} 
\etocsettocstyle{\section*{Appendix Table of Contents}}{}
\localtableofcontents

\newpage

\section{Theoretical Analysis}
\label{theory}
Recall that our local update combines \emph{local learning} and \emph{anchor calibration} with a client-specific calibration weight.\footnote{In Sec.~2, this weight is denoted by $\alpha_m$; in this section we use $\lambda$ to denote the same intervention strength for analysis convenience, i.e., $\lambda \equiv \alpha$.}
This design directly matches the trust dilemma in Fed-LDL: clients have heterogeneous supervision quality, so the reliability of their local updates varies substantially. A natural question is whether we can instead use a \emph{uniform} calibration strength for all clients. We show that, under such heterogeneity, treating all clients equally is provably suboptimal, thereby justifying the effectiveness of our quality-adaptive modulation used in FedQual.

\begin{theorem}[Adaptive calibration is strictly better than uniform calibration]
\label{thm:adaptive_superiority_icml}
For client $m$, consider the expected excess risk $\mathcal{R}_m(\lambda)$ of applying anchor calibration with strength $\lambda\in[0,1]$ under the local quadratic surrogate in Eq.~\eqref{eq:bias_var}. 
Define
$\mathcal{J}_{\text{adapt}} \triangleq \sum_{m=1}^M \min_{\lambda_m\in[0,1]} \mathcal{R}_m(\lambda_m)$ and 
$\mathcal{J}_{\text{uni}} \triangleq \min_{\bar{\lambda}\in[0,1]} \sum_{m=1}^M \mathcal{R}_m(\bar{\lambda})$.
If the optimal trade-offs are not identical across clients, i.e., there exist $m\neq n$ such that $\lambda_m^\star \neq \lambda_n^\star$ (with $\lambda_m^\star$ given in Eq.~\eqref{eq:lambda_star}), 
then the adaptive strategy achieves strictly smaller total risk:
\begin{equation}
\mathcal{J}_{\text{adapt}} < \mathcal{J}_{\text{uni}}.
\end{equation}
\end{theorem}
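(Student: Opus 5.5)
Since Eq.~\eqref{eq:bias_var} and Eq.~\eqref{eq:lambda_star} come after this statement, I assume the standard form the theorem implies. Model the calibrated client solution as the interpolation of Eq.~\eqref{eq:rect_param}, $\mathbf{w}_m(\lambda)=(1-\lambda)\hat{\mathbf{w}}_m+\lambda\mathbf{w}_g$. Here $\hat{\mathbf{w}}_m$ is the local solution, with noise variance $\sigma_m^2$ driven by the quality $q_m$, and $\mathbf{w}_g$ is the anchor, with squared bias $b_m^2$ relative to the client optimum. Under the quadratic surrogate, the expected excess risk then takes the bias--variance form
\begin{equation*}
\mathcal{R}_m(\lambda)=(1-\lambda)^2 a_m+\lambda^2 c_m ,
\end{equation*}
with $a_m,c_m\ge 0$ and $a_m+c_m>0$, up to an additive constant that does not depend on $\lambda$. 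If Eq.~\eqref{eq:bias_var} also contains a cross term, the same argument goes through provided $\mathcal{R}_m$ is a strictly convex quadratic in $\lambda$. The first step is to record that each $\mathcal{R}_m$ is strictly convex with curvature $\kappa_m\triangleq\mathcal{R}_m''=2(a_m+c_m)>0$. Its unique minimizer over $[0,1]$ is $\lambda_m^\star=a_m/(a_m+c_m)$, matching Eq.~\eqref{eq:lambda_star}, and it automatically lies in $[0,1]$. Because the function is quadratic, we have the exact identity
\begin{equation*}
\mathcal{R}_m(\lambda)=\mathcal{R}_m(\lambda_m^\star)+\tfrac{\kappa_m}{2}(\lambda-\lambda_m^\star)^2 .
\end{equation*}

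Second, summing this identity over clients gives, for every $\bar\lambda\in[0,1]$,
\begin{equation*}
\sum_m\mathcal{R}_m(\bar\lambda)=\mathcal{J}_{\text{adapt}}+\sum_m\tfrac{\kappa_m}{2}(\bar\lambda-\lambda_m^\star)^2 .
\end{equation*}
Hence $\mathcal{J}_{\text{uni}}-\mathcal{J}_{\text{adapt}}=\min_{\bar\lambda\in[0,1]}\sum_m\frac{\kappa_m}{2}(\bar\lambda-\lambda_m^\star)^2$. The minimum is attained at the curvature-weighted mean $\bar\lambda^\star=\sum_m\kappa_m\lambda_m^\star/\sum_m\kappa_m\in[0,1]$, and it equals half the weighted variance $\tfrac12\sum_m\kappa_m(\lambda_m^\star-\bar\lambda^\star)^2$.

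Third, I show this gap is strictly positive. Suppose it were zero. Since every $\kappa_m>0$, each term would have to vanish, forcing $\lambda_m^\star=\bar\lambda^\star$ for all $m$. That contradicts the hypothesis that $\lambda_m^\star\neq\lambda_n^\star$ for some $m\neq n$. A shorter alternative avoids the exact identity: $\bar\lambda^\star$ differs from at least one of $\lambda_m^\star,\lambda_n^\star$, say $\lambda_m^\star$. Uniqueness of that minimizer gives $\mathcal{R}_m(\bar\lambda^\star)>\mathcal{R}_m(\lambda_m^\star)$, while $\mathcal{R}_k(\bar\lambda^\star)\ge\mathcal{R}_k(\lambda_k^\star)$ for every other $k$, so summing yields the strict inequality. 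This version needs only uniqueness of each client's minimizer and the existence of the uniform minimizer, which holds by continuity on the compact interval $[0,1]$.

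The main obstacle is confirming that the surrogate in Eq.~\eqref{eq:bias_var} really makes each $\mathcal{R}_m$ strictly convex, or at least gives it a unique minimizer on $[0,1]$. The degenerate case $a_m=c_m=0$, where $\mathcal{R}_m$ is constant and $\lambda_m^\star$ is ill-defined, must be excluded. I would handle this by assuming positive noise variance $\sigma_m^2>0$ for every client, which is natural for finite-annotator label distributions. If a box constraint can bind, so that the unconstrained minimizer falls outside $[0,1]$, the uniqueness argument still works, since a strictly convex function has a unique minimizer on a compact interval.
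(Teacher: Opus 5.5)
Your proof is correct and follows the paper's argument. The paper likewise writes $\mathcal{R}_m(\lambda)=\mathcal{R}_m(\lambda_m^\star)+(\sigma_m^2+\delta_m^2)(\lambda-\lambda_m^\star)^2$ (your $\kappa_m/2=a_m+c_m$), sums over clients to split $\sum_m\mathcal{R}_m(\bar\lambda)$ into $\mathcal{J}_{\text{adapt}}$ plus a weighted excess term, and argues that this excess cannot vanish when two $\lambda_m^\star$ differ; your explicit minimization of the excess at the curvature-weighted mean, and your remark that $\lambda_m^\star=a_m/(a_m+c_m)$ never needs clipping so the identity is exact on $[0,1]$, make rigorous the final ``minimizing over $\bar\lambda$'' step that the paper states in one line.
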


Theorem~\ref{thm:adaptive_superiority_icml} proves that a \emph{uniform} calibration strength is strictly suboptimal under client heterogeneity, whereas \emph{client-specific} calibration achieves lower total risk. Practically, it prevents over-correcting high-quality clients while providing stronger rectification for low-quality ones, leading to a more stable and accurate federation.

\begin{proof}
For tractability, we model the rectified outcome of client $m$ by a convex interpolation in parameter space,
\begin{equation}
\label{eq:rect_param}
\mathbf{w}_{m}^{\text{rect}}(\lambda)
\triangleq (1-\lambda)\mathbf{w}_{m}^{\text{loc}} + \lambda\,\mathbf{w}_g^{t},
\end{equation}
where $\mathbf{w}_{m}^{\text{loc}}$ denotes the (hypothetical) local solution induced by clean supervision on client $m$, and $\mathbf{w}_g^{t}$ is the global model parameter (which induces the anchor) broadcast at round $t$.
Assuming the loss is locally smooth and strongly convex around the client-specific optimum $\mathbf{w}_m^\star$, the expected excess risk of $\mathbf{w}_{m}^{\text{rect}}(\lambda)$ admits the following bias--variance surrogate:
\begin{equation}
\label{eq:bias_var}
\mathcal{R}_m(\lambda)
~\approx~
(1-\lambda)^2\,\sigma_m^2
~+~
\lambda^2\,\delta_m^2
~+~ C,
\end{equation}
where $\sigma_m^2 \triangleq \mathbb{E}\big[\|\mathbf{w}_{m}^{\text{loc}}-\mathbf{w}_m^\star\|_2^2\big]$ captures the variance induced by local supervision noise,
$\delta_m^2 \triangleq \|\mathbf{w}_g^{t}-\mathbf{w}_m^\star\|_2^2$ captures the bias caused by semantic shift between the anchor and the client-specific optimum,
and $C$ is independent of $\lambda$.
Since $\mathcal{R}_m(\lambda)$ is a strictly convex quadratic in $\lambda$, its unique minimizer is
\begin{equation}
\label{eq:lambda_star}
\lambda_m^\star
=\arg\min_{\lambda\in[0,1]}\mathcal{R}_m(\lambda)
=
\frac{\sigma_m^2}{\sigma_m^2+\delta_m^2},
\end{equation}
(clipped to $[0,1]$ if necessary). Moreover, $\mathcal{R}_m(\lambda)$ can be written in completed-square form as
\begin{equation}
\label{eq:quad_expand}
\mathcal{R}_m(\lambda)
=
\mathcal{R}_m(\lambda_m^\star)
+
(\sigma_m^2+\delta_m^2)\,(\lambda-\lambda_m^\star)^2,
\end{equation}
up to a $\lambda$-independent constant.
Therefore, for any uniform $\bar{\lambda}$,
\begin{equation}
\label{eq:excess_risk}
\sum_{m=1}^M \mathcal{R}_m(\bar{\lambda})
=
\underbrace{\sum_{m=1}^M \mathcal{R}_m(\lambda_m^\star)}_{\mathcal{J}_{\text{adapt}}}
+
\underbrace{\sum_{m=1}^M (\sigma_m^2+\delta_m^2)\,(\bar{\lambda}-\lambda_m^\star)^2}_{\text{excess risk}}.
\end{equation}
The excess-risk term is a sum of nonnegative quadratic terms and is strictly positive when $\exists\, m\neq n$ such that $\lambda_m^\star\neq \lambda_n^\star$, since no single $\bar{\lambda}$ can satisfy $\bar{\lambda}=\lambda_m^\star$ for all $m$ simultaneously. Minimizing over $\bar{\lambda}\in[0,1]$ thus yields $\mathcal{J}_{\text{uni}}>\mathcal{J}_{\text{adapt}}$, completing the proof.
\end{proof}

\noindent\textbf{Remark.}
Our method performs anchor calibration via a logits-level regularizer rather than explicitly interpolating parameters. 
For theoretical tractability, we adopt a standard surrogate view that this calibration \emph{implicitly pulls} the client solution toward the global anchor, which motivates the interpolation model in Eq.~\eqref{eq:rect_param}.

\section{Implementation Details}
\label{sec:app_details}
\resetcounters
\subsection{Notation and Definition}
For clarity and ease of reference, all notations and definitions used throughout this paper are summarized in Table~\ref{notation}.
\begin{table}[h!]
\centering
\caption{Mathematical Notations and Definitions}
\label{notation}
\renewcommand{\arraystretch}{1.5}
\begin{tabular}{|l|p{10cm}|}
\hline
\textbf{Symbol} & \textbf{Definition} \\ \hline

$M$ & The total number of clients in the Fed-LDL system. \\ \hline
$\mathcal{U}$ & The set of all participating clients. \\ \hline
$\mathcal{D}_m$ & The local dataset held by client $u_m$. \\ \hline
$N_m$ & The number of samples in the local dataset of client $m$. \\ \hline
$C$ & The number of classes (label space dimension). \\ \hline

$t$ & The communication round index. \\ \hline
$\mathbf{w}_m^t$ & The local model parameters of client $m$ at round $t$. \\ \hline
$\mathbf{z}_m^t(\mathbf{x})$ & The logits vector output by the model. \\ \hline
$\mathbf{p}_m^t(\mathbf{x})$ & The predicted label distribution, calculated as $\text{softmax}(\mathbf{z}_m^t(\mathbf{x}))$. \\ \hline
$q_m$ & Quality Indicator for client $m$, quantifying the statistical fidelity of $\mathcal{D}_m$. \\ \hline

$\mathcal{A}(\mathbf{x})$ & Global Semantic Anchor (GSA), defined as the logits of the global model $\mathbf{z}(\mathbf{x}; \mathbf{w}_g^t)$. \\ \hline

$\mathcal{R}(\cdot, \cdot)$ & The semantic alignment regularizer. \\ \hline

$\alpha_m$ & Client-specific calibration weight regulating the trade-off between local learning and anchor calibration. \\ \hline
$\tau$ & Normalization factor for $q_m$ (e.g., set to the maximum quality score). \\ \hline
$\beta$ & Hyperparameter controlling the sharpness of the transition in the calibration weight function. \\ \hline
$\lambda_0$ & An intervention offset parameter for the calibration weight. \\ \hline
$S_m$ & Effective reliable information provided by client $m$, defined as $S_m \triangleq N_m \cdot q_m$. \\ \hline
$\rho_t$ & Trust annealing factor at round $t$, controlling the transition from quality-aware to quantity-aware weighting. \\ \hline
$\widetilde{S}_m^{\,t}$ & Intermediate score for aggregation at round $t$, defined as $(S_m)^{1-\rho_t}(N_m)^{\rho_t}$. \\ \hline
$\gamma_{temp}$ & The inverse-temperature parameter controlling the sharpness of the weighting distribution. \\ \hline
$\sigma_m^2$ & Local variance, defined as $\mathbb{E}[\|\mathbf{w}_{m}^{\text{loc}}-\mathbf{w}_m^\star\|_2^2]$. \\ \hline
$\delta_m^2$ & Global shift, defined as $\|\mathbf{w}_g^{t}-\mathbf{w}_m^\star\|_2^2$. \\ \hline
$\rho_{noise}$ & The ratio of low-quality clients. \\ \hline
$K$ & Parameter for Top-K operation used to discretize distributions. \\ \hline
$\gamma$ & Concentration parameter of the Dirichlet distribution for partitioning. \\ \hline
$\rho_{online}$ & The ratio of activated clients per communication round. \\ \hline
\end{tabular}
\end{table}
\subsection{Experimental Setup}
Unless otherwise stated, we implement all algorithms using PyTorch on eight NVIDIA RTX 4090 GPUs. All methods utilize a ResNet-18 backbone tailored to the resolution of the respective datasets (e.g., $48 \times 48$ for FER2013). Local training runs for $E=5$ epochs per communication round using the SGD optimizer with a learning rate of $\eta=0.01$, momentum of $0.9$, and weight decay of $10^{-4}$. The local batch size is set to $B=16$, and the total communication rounds are set to $T=100$. 
\subsection{Loss Function}
Following the standard LDL paradigm, we employ the Kullback-Leibler (KL) divergence as the objective function to measure the discrepancy between the predicted label distribution and the ground truth. Specifically, for a client $u_m$ with dataset $\mathcal{D}_m=\{(\mathbf{x}_{m,j},\mathbf{d}_{m,j})\}_{j=1}^{N_m}$, the local loss function is defined as:
\begin{equation}
\mathcal{L}_{LDL}(\mathbf{w}_m) = \frac{1}{N_m} \sum_{j=1}^{N_m} \sum_{c=1}^{C} d_{m,j}^{(c)} \ln \frac{d_{m,j}^{(c)}}{p_m^{t(c)}(\mathbf{x}_{m,j})}
\end{equation}
where $d_{m,j}^{(c)}$ and $p_m^{t(c)}(\mathbf{x}_{m,j})$ denote the $c$-th element of the ground-truth distribution $\mathbf{d}_{m,j}$ and the predicted distribution $\mathbf{p}_m^{t}(\mathbf{x}_{m,j})$, respectively.
\subsection{Method-specific Hyperparameters}
For our proposed FedQual framework, we set the modulation hyperparameters in the client-side rectifier (Eq. 3) as $\beta =5 $ and $\lambda_0 =0.5 $, with the normalization factor $\tau$ set to the maximum possible quality score (e.g., 10 for the 10-expert setting). For the server-side aggregation (Eq. 6), the inverse temperature parameter is set to $\gamma_{temp} = 1$ to control the sharpness of the reweighting mechanism.

\section{Related Work}
\label{app:related_work}
\resetcounters
\subsection{Federated Learning under Data Heterogeneity}
Federated learning enables collaborative training across decentralized clients while preserving privacy~\cite{fl_overeivew,fedharmony}. However, non-IID data poses major challenges, leading to gradient inconsistency and degraded global models. To mitigate these issues, extensive research has explored two primary directions: optimization correction and representation alignment.
In the realm of optimization, methods such as FedProx~\cite{fedprox} mitigate client drift by regularizing local updates, thereby restricting them from deviating excessively from the global model. Parallelly, addressing feature-level heterogeneity has become crucial. MOON~\cite{moon} leverages model-contrastive learning to align local feature representations, while FedGloSS~\cite{fedgloss} generates synthetic samples to compensate for visual distribution discrepancies. 

Beyond statistical variations, the disparity in annotation quality introduces the challenge of heterogeneous label noise, where clients exhibit inconsistent noise levels. Recent robust FL frameworks have employed diverse strategies to combat this issue. To filter unreliable signals, FedDiv~\cite{Li2023FedDivCN} and FedFixer~\cite{Ji2024FedFixerMH} propose collaborative mechanisms to identify and exclude clients or samples with heterogeneous noise. Focusing on label refinement, FedENLC~\cite{Cho2026FedENLCAE} introduces an end-to-end framework for noisy label correction, while FedES~\cite{Zeng2024FedESFE} utilizes federated early-stopping to prevent models from memorizing noisy patterns. However, these approaches generally target discrete categorical noise. They typically treat noise as binary errors, making them ill-equipped for tasks requiring dense, continuous supervision, where noise manifests as subtle shifts in probability distributions rather than simple misclassifications.

\subsection{Label Distribution Learning}
Label Distribution Learning (LDL) has emerged as a paramount paradigm for quantifying label ambiguity. Unlike traditional classification frameworks that enforce definitive hard decisions, LDL models the label space as a probability distribution $\mathbf{d} \in \Delta^C$, where each element represents the description degree of a specific class to the instance~\cite{gengldl2}. By capturing the relative importance of multiple labels, this paradigm enables nuanced semantic interpretation and has proven indispensable in tasks involving subjective uncertainty, such as facial emotion recognition and medical image assessment~\cite{facil_1,facial_2,medical_1,medical_2,medical_3}.

However, the efficacy of LDL hinges critically on the quality of the supervision signal. Theoretically, constructing a reliable ground-truth distribution necessitates aggregating diverse opinions from a large number of annotators to capture the true underlying statistics~\cite{inaccurate_ldl}. To mitigate the impact of supervision noise, extensive research has focused on recovering reliable distributions from imperfect data. Existing approaches can be broadly categorized into structure-based and correlation-based methods. For instance, Manifold Learning techniques~\cite{Wang2021LabelDL} exploit the local topological structure of the feature space to smooth out label noise, operating on the assumption that valid distributions should vary smoothly across the data manifold. Similarly, Label Correlation methods leverage global statistical dependencies among labels (e.g., the co-occurrence of distinct emotions) to rectify inconsistent predictions. Notable works~\cite{Ren2019LabelDL} employ low-rank approximation to model these dependencies, while others~\cite{xuldl,Xu2024IncompleteLD} utilize correlation decomposition mechanisms to effectively complete missing label information in sparse distribution settings.

Despite their effectiveness, these remediation strategies predominantly rely on a centralized data access assumption, where the algorithm can fully observe the entire dataset to compute global manifold structures or label correlation matrices. This dependency poses a fundamental limitation in modern privacy-sensitive scenarios. Therefore, standard Inaccurate LDL approaches are ill-suited for distributed environments where supervision quality varies but cross-client data inspection is prohibited.

\subsection{Federated Label Distribution Learning}
Federated Label Distribution Learning (Fed-LDL) focuses on training a global model to predict nuanced label distributions from decentralized data sources without sharing raw samples. Unlike conventional federated classification tasks, Fed-LDL introduces a unique challenge: the fidelity of the supervision signal varies substantially across clients due to \textit{Annotation Quality Heterogeneity} \cite{kou2025nips, kou2026pu}. Local datasets are often annotated by users with varying levels of expertise, ranging from professional institutions to amateur users. Consequently, each client develops a biased or noisy understanding of the underlying semantic distribution, creating systematic inconsistencies in supervision reliability across the network.

This heterogeneity creates a severe "Trust Dilemma" that breaks the assumptions of standard aggregation. Label distributions naturally exhibit dense, continuous semantics, where noise manifests as subtle shifts in probability mass rather than discrete label flips \cite{koutnnls}. Consequently, conventional strategies that weight contributions strictly by sample size (e.g., FedAvg) become counterproductive, as they allow "large-but-noisy" clients to dominate the global model. Furthermore, directly transplanting Robust FL methods is often ineffective because they typically target discrete categorical errors, while centralized Inaccurate LDL methods are inapplicable as their noise-correction mechanisms rely on globally shared statistical structures that are inaccessible in privacy-preserving environments.

Therefore, the core difficulty in Fed-LDL is to construct a trustworthy global model from distributed sources with disparate supervision quality. Effective Fed-LDL methods must be capable of distinguishing "data quantity" from "data quality," preventing noisy clients from degrading the aggregation, and ensuring that the global model is driven by high-fidelity knowledge. This motivates the need for new frameworks that can explicitly assess client reliability and harmonize the aggregation process to mitigate the trust dilemma inherent in heterogeneous supervision environments.

\section{Benchmark Construction Details}
\label{app:dataset_details}
\resetcounters
To establish rigorous benchmarks for Fed-LDL, we selected four publicly available datasets widely recognized in the computer vision community, covering two core tasks: Facial Emotion Recognition (FER) and Image Quality Assessment (IQA). We further applied a systematic data cleaning procedure to exclude corrupted or invalid samples, thereby constructing a reliable benchmark suitable for federated evaluation.
\subsection{Source Datasets}
\noindent\textbf{1.FER2013}
This dataset originates from the ICML 2013 Challenges in Representation Learning~\cite{fer2013}. Unlike datasets collected in controlled laboratory settings, FER2013 images were retrieved via the Google Image Search API, representing in-the-wild conditions. The raw data provides single-label annotations across seven basic emotions (Anger, Disgust, Fear, Happiness, Sadness, Surprise, and Neutral). We selected this dataset because its uncontrolled environment introduces significant occlusion, pose variation, and illumination interference, which naturally create semantic ambiguity suitable for label distribution learning.

\noindent\textbf{2. FI}
 This dataset is a large-scale benchmark for affective computing which consists of over 23,000 real-world images collected from social media platforms (Flickr and Instagram)~\cite{fl}. The original annotations define eight emotion categories based on Mikels' psychological model: Amusement, Anger, Awe, Contentment, Disgust, Excitement, Fear, and Sadness. Due to its social media origin, the FI dataset exhibits high content diversity and background complexity, making it an ideal testbed for simulating decentralized user data in federated learning scenarios.

\noindent\textbf{3. KADID-10k }
The Konstanz Artificially Distorted Image quality Database (KADID-10k) contains 10,125 distorted images derived from 81 pristine reference images collected from Pixabay~\cite{kadid10k}. All images are standardized to a resolution of $512 \times 384$ pixels. The dataset covers 25 distortion types (including blur, color shifts, and compression) across 5 intensity levels. The original ground truths were obtained via a crowdsourced Degradation Category Rating (DCR) protocol with 30 ratings per image. We selected KADID-10k because its diverse distortion types effectively simulate the hardware-induced quality variations (e.g., sensor noise, transmission artifacts) prevalent in federated edge devices.

\noindent\textbf{4. PIPAL}
The Perceptual Image Processing ALgorithms (PIPAL) dataset is a large-scale IQA benchmark explicitly designed to evaluate Image Restoration (IR) algorithms~\cite{pipal}. It contains 29,000 distorted images generated from 250 high-quality reference patches ($288 \times 288$ pixels) selected from the DIV2K and Flickr2K datasets. A distinguishing feature of PIPAL is its inclusion of 40 distortion types, specifically the outputs of varying IR algorithms, including GAN-based Super-Resolution methods (e.g., ESRGAN). These GAN-based distortions introduce novel ``texture-like'' noise that violates natural image statistics, posing a severe challenge to model robustness. Furthermore, the original labels were derived using the Elo rating system based on over 1.13 million human judgements, ensuring high-fidelity reliability for our benchmark construction.

\subsection{Annotation Protocol}
\label{sec:appendix_annotation}

Unlike previous datasets that rely on unmonitored crowdsourcing platforms, which often introduce significant label noise due to annotator irresponsibility or environmental variance, we adopted a rigorous Laboratory-Controlled Expert Ensemble protocol. This protocol ensures that the generated label distributions reflect genuine semantic ambiguity rather than annotation errors.

\paragraph{Expert Recruitment and Demographics.}
We recruited two fixed panels of 10 domain experts each, one specializing in Facial Emotion Recognition (FER) and the other in Image Quality Assessment (IQA), from university research laboratories with expertise in Computer Vision and Psychology. To ensure high-fidelity annotations and generalization, the panel construction adhered to strict criteria:
(i) \textbf{Screening and Expertise:} All candidates passed the \textit{Ishihara Color Vision Test} and demonstrated normal visual acuity. For IQA tasks, experts possess academic backgrounds in digital image processing, ensuring proficiency in distinguishing complex distortions (e.g., GAN artifacts). For FER tasks, experts received training based on the \textit{Facial Action Coding System (FACS)} and were evaluated on micro-expression identification.
(ii) \textbf{Diversity Protocol:} To minimize demographic bias, we enforced a strict diversity protocol. Each panel consists of an equal split of \textit{5 males and 5 females}, which is crucial for FER tasks given potential gender differences in emotion perception. Furthermore, experts were selected from diverse regional backgrounds to mitigate cultural bias in interpreting facial expressions or aesthetic preferences.

\paragraph{Training and Calibration Phase.}
Prior to formal annotation, we conducted a mandatory calibration phase to align internal standards and minimize intra-rater variance.
First, experts independently annotated a \textit{Pilot Set} of 100 representative images covering high-ambiguity cases.
Subsequently, we analyzed the results and held \textit{Consensus Meetings} to resolve samples with high disagreement (e.g., distinguishing between ``Sadness'' and ``Neutral'' in low-intensity expressions). This process established a unified consensus on boundary cases.
Finally, only experts who achieved a high consistency score against the group consensus during this pilot phase were admitted to the final panel.

\paragraph{Controlled Environment and Ethics.}
To eliminate environmental noise commonly observed in crowdsourcing, all annotations were conducted in a standardized laboratory environment.
\textbf{Physical Setup:} Experts used calibrated professional monitors under uniform ambient lighting to ensure consistent color and contrast perception.
\textbf{Fatigue Management:} Annotation sessions were strictly limited to 45 minutes with mandatory breaks to prevent visual fatigue from degrading data quality.
\textbf{Compensation and Ethics:} All participants provided informed consent and were compensated at a rate significantly higher than the local minimum hourly wage, acknowledging the high cognitive load of the expert annotation task.

\subsection{Recruitment and Consent Form for FER Task}
\label{sec:appendix_fer_consent}

Below is the template of the recruitment and informed consent form provided to the 10 domain experts who participated in the Facial Emotion Recognition (FER) annotation study.

\begin{tcolorbox}[colback=white, colframe=black, arc=0mm, boxrule=0.5pt]
\begin{center}
    \large \textbf{Participant Recruitment Form for Visual Emotion Annotation Study}
\end{center}

\vspace{2mm}
\textbf{Study Overview} \\
We are conducting research on Federated Label Distribution Learning (Fed-LDL). The goal is to collect high-quality label distribution annotations from human experts to support the development of a robust and trustworthy Federated Label Distribution Learning (Fed-LDL) framework.

\vspace{2mm}
\textbf{Participation Details}
\begin{itemize}
    \item \textbf{Task:} Participants will perform Cognitive-Guided Multi-Label Annotation. For each image, you must identify all applicable emotions from a set of 8 keywords based on facial expressions, body language, and scene context.
    \item \textbf{Duration:} Each annotation session will take approximately 45 minutes, with mandatory breaks to prevent visual and cognitive fatigue.
    \item \textbf{Compensation:} Participants will receive \textbf{\$100 USD} upon completion of the task.
    \item \textbf{Eligibility:} Participants must pass the Ishihara Color Vision Test, possess a background in CV or Psychology, and have received training in the Facial Action Coding System (FACS).
\end{itemize}

\vspace{2mm}
\textbf{Ethics and Data Use}
\begin{itemize}
    \item Participation is entirely voluntary. You may withdraw at any time without penalty.
    \item All responses and personal data will be anonymized and used exclusively for academic research.
    \item The resulting datasets will be released under an academic license for non-commercial use.
\end{itemize}

\vspace{2mm}
\textbf{Consent Declaration} \\
By signing below, I acknowledge that I have read the above information and voluntarily agree to participate in this study. I understand I may withdraw at any time and that my responses will remain confidential.

\vspace{8mm}
Name (print): \rule{3.5cm}{0.4pt} \hfill Signature: \rule{3.5cm}{0.4pt} \hfill Date: \rule{2.5cm}{0.4pt}

\vspace{4mm}
\textbf{Contact Information:} \\
Principal Investigator: Dr. XXX \\
Email: XXXX@XXXX.edu \\
Phone: +XX-XXXX-XXXX
\end{tcolorbox}

\subsection{Recruitment and Consent Form for IQA Task}
\label{sec:appendix_iqa_consent}

Below is the template of the recruitment and informed consent form provided to the 10 domain experts who participated in the Image Quality Assessment (IQA) annotation study.

\begin{tcolorbox}[colback=white, colframe=black, arc=0mm, boxrule=0.5pt]
\begin{center}
    \large \textbf{Participant Recruitment Form for Image Quality Assessment Study}
\end{center}

\vspace{2mm}
\textbf{Study Overview} \\
We are conducting research on Federated Label Distribution Learning (Fed-LDL). The goal is to collect high-quality label distribution annotations from human experts to support the development of a robust and trustworthy Federated Label Distribution Learning (Fed-LDL) framework.

\vspace{2mm}
\textbf{Participation Details}
\begin{itemize}
    \item \textbf{Task:} Participants will evaluate images across six perceptual dimensions: Sharpness, Noise, Exposure, Color Accuracy, Artifacts, and Stability, using a standardized 5-point scale.
    \item \textbf{Duration:} Annotation sessions are limited to 45 minutes to maintain high consistency in visual judgment.
    \item \textbf{Compensation:} Participants will receive \textbf{\$100 USD} upon completion of the task.
    \item \textbf{Eligibility:} Participants must pass the Ishihara Color Vision Test and possess expertise in digital image processing to distinguish complex distortions such as GAN-based artifacts.
\end{itemize}

\vspace{2mm}
\textbf{Ethics and Data Use}
\begin{itemize}
    \item Participation is entirely voluntary. You may withdraw at any time without penalty.
    \item All responses and personal data will be anonymized and used exclusively for academic research.
    \item The resulting datasets will be released under an academic license for non-commercial use.
\end{itemize}

\vspace{2mm}
\textbf{Consent Declaration} \\
By signing below, I acknowledge that I have read the above information and voluntarily agree to participate in this study. I understand I may withdraw at any time and that my responses will remain confidential.

\vspace{8mm}
Name (print): \rule{3.5cm}{0.4pt} \hfill Signature: \rule{3.5cm}{0.4pt} \hfill Date: \rule{2.5cm}{0.4pt}

\vspace{4mm}
\textbf{Contact Information:} \\
Principal Investigator: Dr. XXX \\
Email: XXXX@XXXX.edu \\
Phone: +XX-XXXX-XXXX
\end{tcolorbox}

\subsection{Annotation Guidelines and Label Distribution Generation}
To ensure consistent and high-quality supervision, we established standardized annotation guidelines for both tasks. The detailed scoring rubrics for Image Quality Assessment (IQA) and the annotation guidelines for Facial Emotion Recognition (FER) are illustrated in Figure \ref{fig:IQA} and Figure \ref{fig:FER}, respectively. Below, we detail the mathematical generation process of the ground-truth label distributions $\mathbf{d}_{GT}$ from these expert annotations.

\paragraph{IQA: Multi-Dimensional Evaluation to Distribution.}
To mitigate the subjectivity bias inherent in holistic scoring, we adopt a decomposed evaluation protocol as visualized in Figure \ref{fig:IQA}. Experts are required to evaluate images across six distinct perceptual dimensions(Sharpness, Noise, Exposure, Color Accuracy, Artifacts, and Stability) using a standardized 5-point scale. To construct the latent ground-truth distribution $\mathbf{d}_{GT}$, we preserve the diversity of expert opinions rather than enforcing a single consensus label. Specifically, for a given image, we first calculate an individual scalar score $s_m \in [1, 5]$ for each expert $m$ ($m=1,\dots,10$) by averaging their ratings across the six dimensions. Subsequently, to capture the inherent ambiguity, each expert's score $s_m$ is converted into a ``soft vote'' vector $\mathbf{v}_m$ using linear interpolation. A score $s_m$ contributes a weight of $1 - |s_m - l|$ to the integer quality level $l$ if $|s_m - l| < 1$, and 0 otherwise. For instance, an expert score of $s_m=3.2$ contributes 0.8 to the ``Fair'' class (Level 3) and 0.2 to the ``Good'' class (Level 4). Finally, the soft votes from all 10 experts are summed and normalized to generate the label distribution $\mathbf{d}_{GT} \in \Delta^5$. This process ensures that the final distribution reflects both the central tendency of the visual quality and the variance in expert judgments.
\begin{figure*}[h!]
  \centering
  \includegraphics[width=\linewidth]{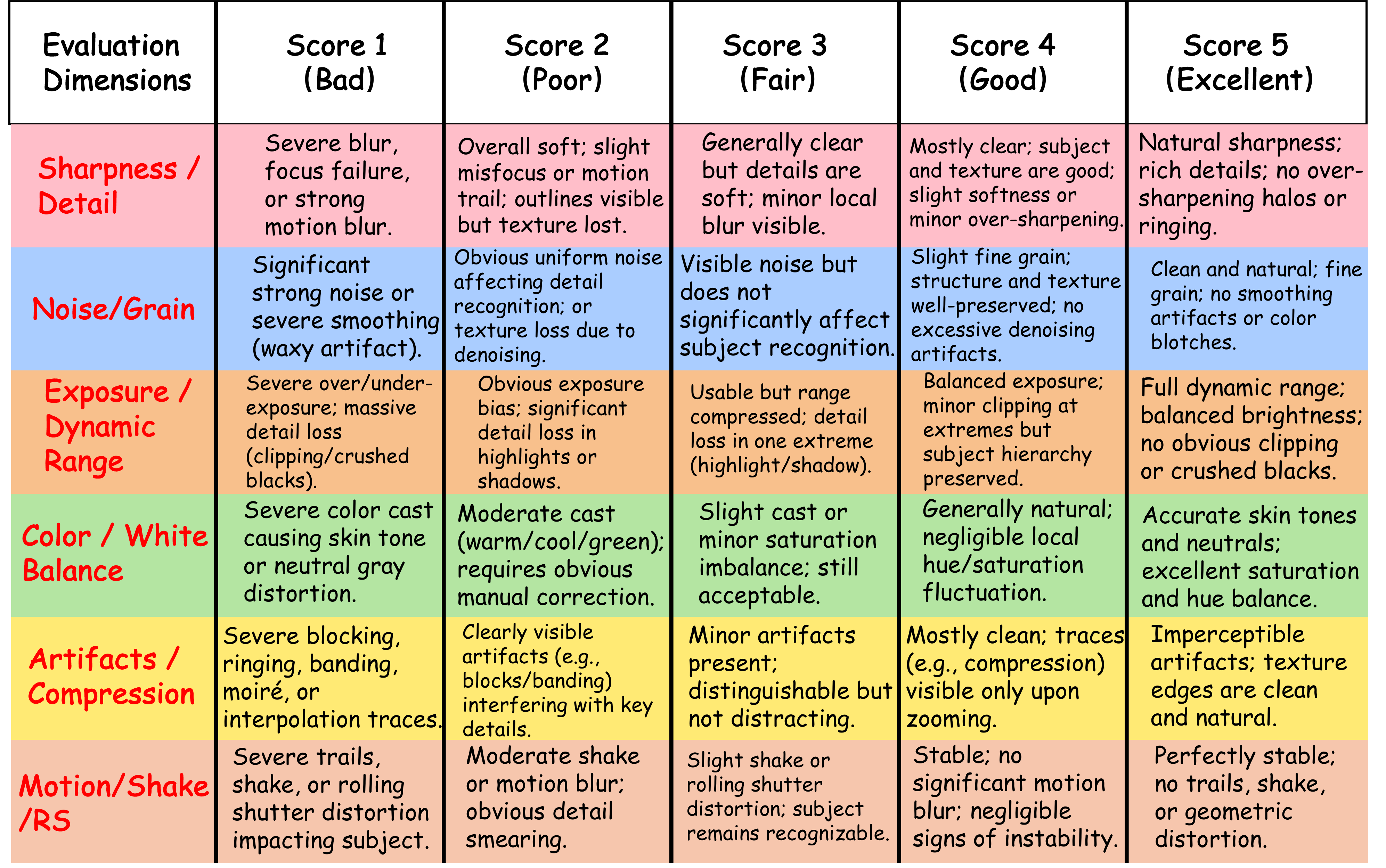} 
  \caption{ 
  Detailed Annotation Rubric for the IQA Task. Experts evaluate images across six perceptual dimensions (Sharpness, Noise, Exposure, Color, Artifacts, and Stability) on a 5-point scale ranging from 1 (Bad) to 5 (Excellent). The specific criteria for each grade serve as a standardized reference to minimize subjective variance and ensure objective quantification of image quality.
  }
  \label{fig:IQA}
\end{figure*}

\paragraph{FER: Cognitive-Guided Multi-Label Annotation.}
To capture the inherent ambiguity of facial expressions, we employ a Cognitive-Guided Protocol as illustrated in Figure \ref{fig:FER}. Unlike traditional single-label annotation, experts are instructed to analyze a comprehensive set of visual cues—including facial expressions, body language, posture, scene context, objects, colors, lighting, and composition—and subsequently select \textbf{all} applicable emotions from a predefined set of eight keywords (Anger, Disgust, Fear, Happiness, Sadness, Surprise, Neutral, Contempt). This multi-label approach allows for the characterization of complex, mixed emotional states often present in real-world imagery.

To transform these discrete multi-label annotations into a continuous label distribution $\mathbf{d}_{GT}$, we aggregate the independent judgments of the expert panel. Let $\mathcal{E} = \{e_1, \dots, e_8\}$ denote the set of emotion categories. For a specific image, let $v_{m,k} \in \{0, 1\}$ represent the binary vote of the $m$-th expert ($m=1,\dots,10$) for the $k$-th emotion category. We first compute an aggregated vote vector $\mathbf{v} = [N_1, \dots, N_8]$, where $N_k = \sum_{m=1}^{10} v_{m,k}$ represents the total endorsement count for emotion $k$. The final ground-truth distribution $\mathbf{d}_{GT} \in \Delta^8$ is then derived via $L_1$ normalization:
\begin{equation}
    \mathbf{d}_{GT} = \frac{\mathbf{v}}{\|\mathbf{v}\|_1 + \epsilon}
\end{equation}
where $\epsilon$ is a small constant for numerical stability. This resulting dense distribution effectively quantifies the intensity and mixture of the perceived emotions.
\begin{figure*}[h!]
  \centering
  \includegraphics[width=\linewidth]{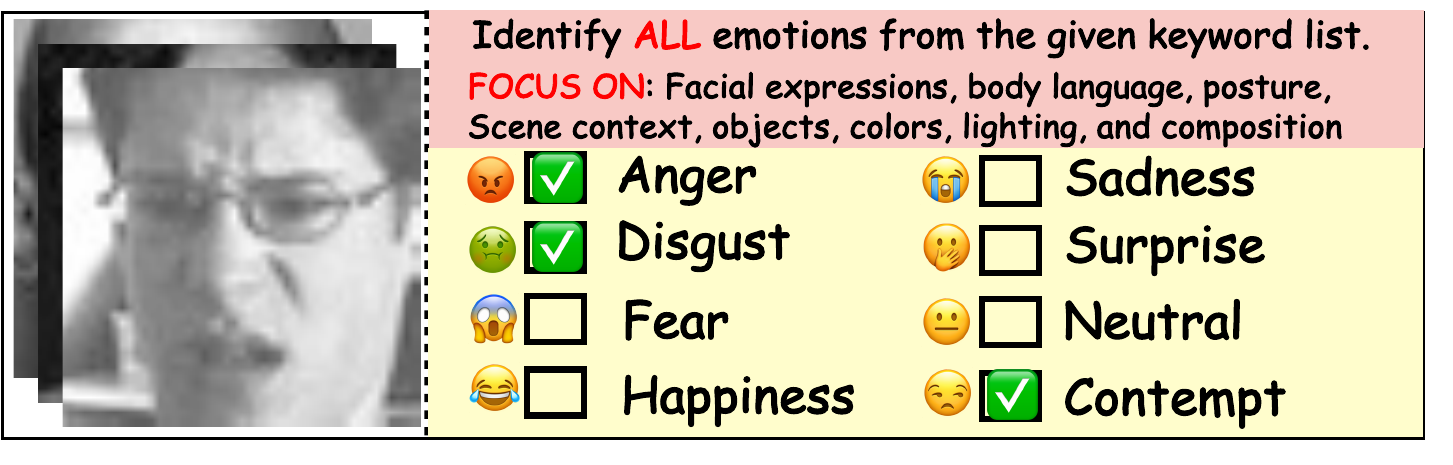} 
  \caption{ 
  Facial Emotion Recognition (FER) annotation guideline.
For each image, annotators select all applicable emotions from a predefined set of eight emotion keywords based on the facial expressions present in the image, allowing multi-label emotion annotation.
  }
  \label{fig:FER}
\end{figure*}

\section{Complete Experimental Results}
\label{app:add_exp}
\resetcounters

\subsection{Complete Ablation Studies}
\label{app:exp_ablation}
Supplementing the analysis in Section~\ref{exp::ablation} of the main text, we present the comprehensive ablation results across all four benchmarks in Table~\ref{tab:ablation_full_row}.

The full results corroborate the necessity of the dual-mechanism design. 
While the individual contribution of the local module (+A) may vary slightly depending on the dataset characteristics, the complete FedQual framework (+A+B) consistently achieves superior performance across the majority of metrics on all datasets.
This empirical evidence confirms that the both components are essential for handling diverse scenarios of annotation quality heterogeneity.
\definecolor{colA}{rgb}{0.92, 1.0, 0.92}
\definecolor{colB}{rgb}{0.92, 0.96, 1.0} 

\begin{table*}[h]
\centering
\caption{Complete ablation study results across all four benchmarks. 
``Base'' denotes the standard FedAvg. 
``+A'' incorporates the \textit{Quality-Modulated Rectifier}. 
``+A+B'' represents the full framework.
The \colorbox{colA}{green} background highlights the local module, and the \colorbox{colB}{blue} background highlights the full model.
\textbf{Bold} indicates the best performance.}
\label{tab:ablation_full_row}

\setlength{\tabcolsep}{6pt} 
\renewcommand{\arraystretch}{1.6} 

\begin{tabular}{l | ccc | ccc | ccc | ccc} 
\toprule
\multirow{2}{*}{\textbf{Metric}} & 
\multicolumn{3}{c|}{\textbf{FER-LDL}} & 
\multicolumn{3}{c|}{\textbf{FI-LDL}} & 
\multicolumn{3}{c|}{\textbf{KADID-LDL}} & 
\multicolumn{3}{c}{\textbf{PIPAL-LDL}} \\
\cmidrule(lr){2-4} \cmidrule(lr){5-7} \cmidrule(lr){8-10} \cmidrule(lr){11-13}

 & Base & \cellcolor{colA}+A & \cellcolor{colB}+A+B 
 & Base & \cellcolor{colA}+A & \cellcolor{colB}+A+B 
 & Base & \cellcolor{colA}+A & \cellcolor{colB}+A+B 
 & Base & \cellcolor{colA}+A & \cellcolor{colB}+A+B \\
\midrule

KL $\downarrow$ 
& 0.342 & \cellcolor{colA}0.264 & \cellcolor{colB}\bb{0.253}  
& 0.423 & \cellcolor{colA}0.482 & \cellcolor{colB}\bb{0.409}  
& 0.125 & \cellcolor{colA}0.116 & \cellcolor{colB}\bb{0.091}  
& 0.252 & \cellcolor{colA}0.154 & \cellcolor{colB}\bb{0.116} \\ 

Cheb $\downarrow$ 
& 0.213 & \cellcolor{colA}0.174 & \cellcolor{colB}\bb{0.172} 
& 0.228 & \cellcolor{colA}0.235 & \cellcolor{colB}\bb{0.217} 
& 0.144 & \cellcolor{colA}0.145 & \cellcolor{colB}\bb{0.120} 
& 0.168 & \cellcolor{colA}0.166 & \cellcolor{colB}\bb{0.139} \\

Clark $\downarrow$ 
& 1.323 & \cellcolor{colA}1.281 & \cellcolor{colB}\bb{1.260} 
& 1.858 & \cellcolor{colA}1.869 & \cellcolor{colB}\bb{1.855} 
& 1.195 & \cellcolor{colA}\bb{0.992} & \cellcolor{colB}1.068 
& 1.398 & \cellcolor{colA}1.399 & \cellcolor{colB}\bb{1.391} \\
Canb $\downarrow$ 
& 3.097 & \cellcolor{colA}2.951 & \cellcolor{colB}\bb{2.887} 
& 4.513 & \cellcolor{colA}4.543 & \cellcolor{colB}\bb{4.480} 
& 2.717 & \cellcolor{colA}2.641 & \cellcolor{colB}\bb{2.572} 
& 2.510 & \cellcolor{colA}2.481 & \cellcolor{colB}\bb{2.387} \\

Inter $\uparrow$ 
& 0.696 & \cellcolor{colA}0.739 & \cellcolor{colB}\bb{0.745} 
& 0.667 & \cellcolor{colA}0.653 & \cellcolor{colB}\bb{0.679} 
& 0.853 & \cellcolor{colA}0.852 & \cellcolor{colB}\bb{0.877} 
& 0.816 & \cellcolor{colA}0.814 & \cellcolor{colB}\bb{0.844} \\

Cos $\uparrow$ 
& 0.816 & \cellcolor{colA}0.868 & \cellcolor{colB}\bb{0.874} 
& 0.810 & \cellcolor{colA}0.802 & \cellcolor{colB}\bb{0.817} 
& 0.957 & \cellcolor{colA}0.956 & \cellcolor{colB}\bb{0.967} 
& 0.929 & \cellcolor{colA}0.937 & \cellcolor{colB}\bb{0.958} \\

\bottomrule
\end{tabular}
\end{table*}

\subsection{Robustness to Annotation Quality Heterogeneity}
\label{app:exp_quality}
We here present the comprehensive visualization of FedQual's robustness against the dual dimensions of annotation quality degradation across all four benchmarks.

Figure~\ref{fig:full_diff_experts} details the performance trends as the quality indicator $q_m$ varies from 5 to 8. Consistent with the main text, the curves remain remarkably flat across all datasets. 
This invariance across diverse visual tasks serves as strong empirical evidence that our framework effectively mitigates local optimization drift, preserving model fidelity regardless of the local noise severity. Furthermore, Figure~\ref{fig:full_diff_unreliable} illustrates the comprehensive metric breakdown as the prevalence of low-quality clients increases from 25\% to 75\%. 
Despite the potential for noise domination, FedQual exhibits exceptional stability across all evaluation metrics. These findings confirm that FedQual is robust to both localized label noise and global client imbalance.
\begin{figure*}[h!]
  \centering
  \includegraphics[width=\linewidth]{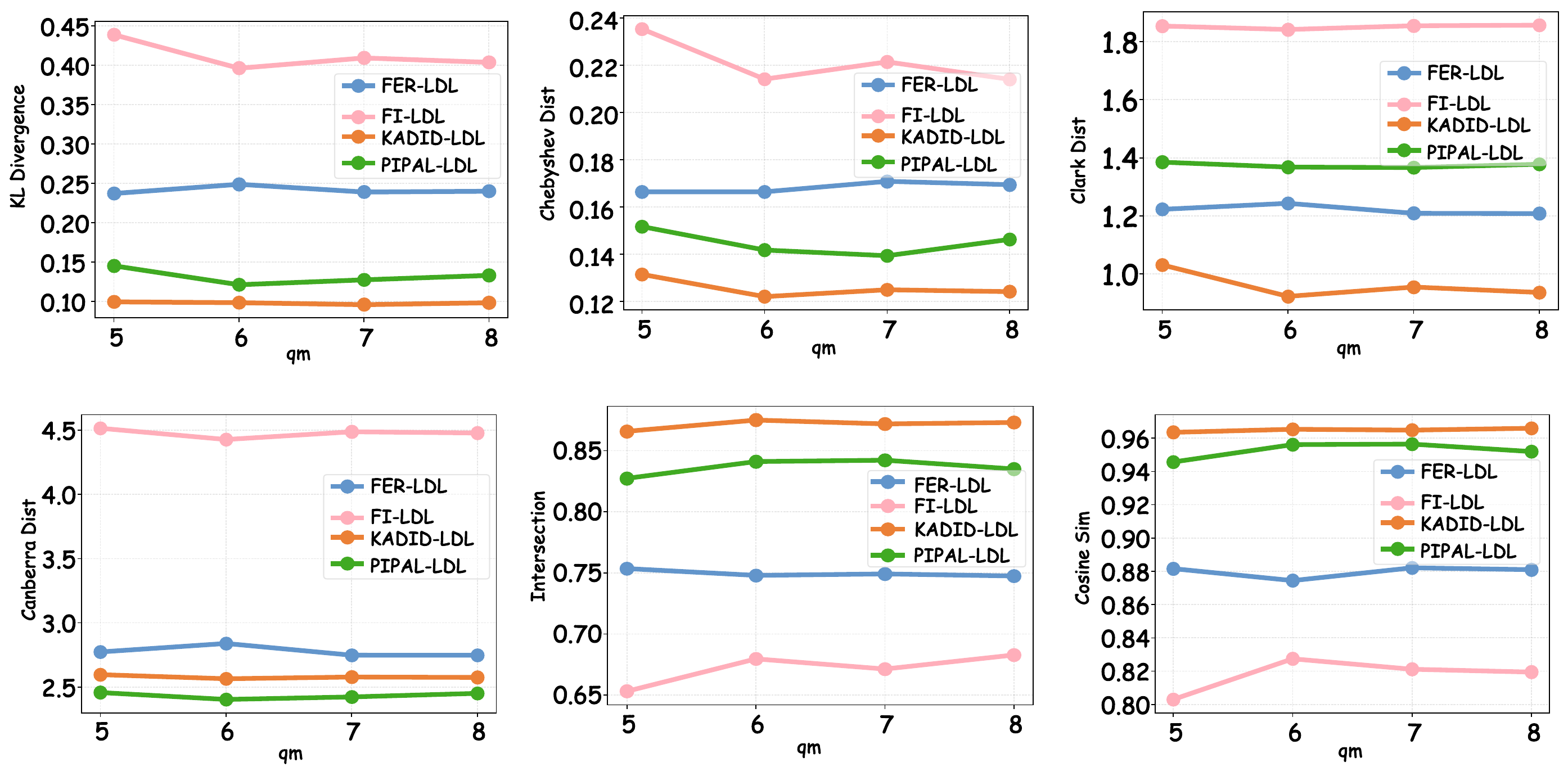} 
  \caption{ 
   \textbf{Sensitivity to Noise Intensity:} The impact of varying the quality indicator $q_m \in \{5, 6, 7, 8\}$ on FER-LDL, FI-LDL, KADID-LDL, and PIPAL-LDL. The flat curves indicate that FedQual maintains high performance regardless of the local noise variance.
  }
  \label{fig:full_diff_experts}
\end{figure*}

\begin{figure*}[t]
  \centering
  \includegraphics[width=0.75\linewidth]{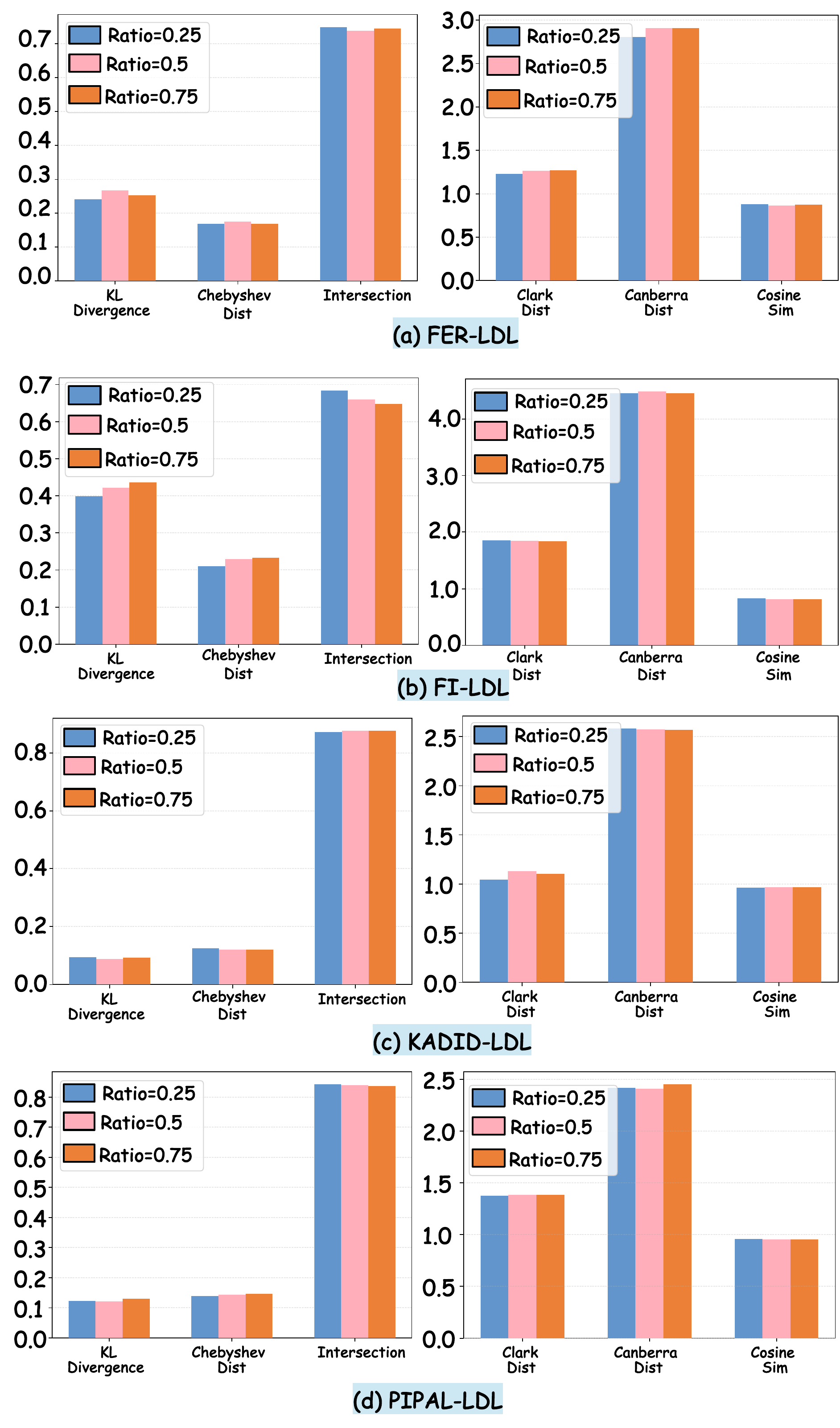} 
  \caption{ 
    \textbf{Impact of Noisy Client Ratio:} Performance comparison under different ratios of low-quality clients ($\rho_{noise} \in \{0.25, 0.50, 0.75\}$). FedQual exhibits strong resistance to noise domination, sustaining stable metrics even when 75\% of clients are noisy.
  }
  \label{fig:full_diff_unreliable}
\end{figure*}

\subsection{Robustness to Label Distribution Skew}
\label{app:exp_noniid}
To complement the analysis in the main text, we present the complete sensitivity analysis regarding the label distribution skew parameter $\gamma$ across all four benchmarks (FER-LDL, FI-LDL, PIPAL-LDL, and KADID-LDL). 
As illustrated in Figure~\ref{fig:full_diff_gamma}, we vary $\gamma \in \{0.25, 0.5, 0.75, 1.0\}$. 
Consistent with the observations on FER-LDL, FedQual demonstrates high stability across all datasets and metrics. These extensive results further corroborate the robustness of our framework against varying degrees of non-IID label skew in diverse vision tasks.

\begin{figure*}[t]
  \centering
  \includegraphics[width=0.8\linewidth]{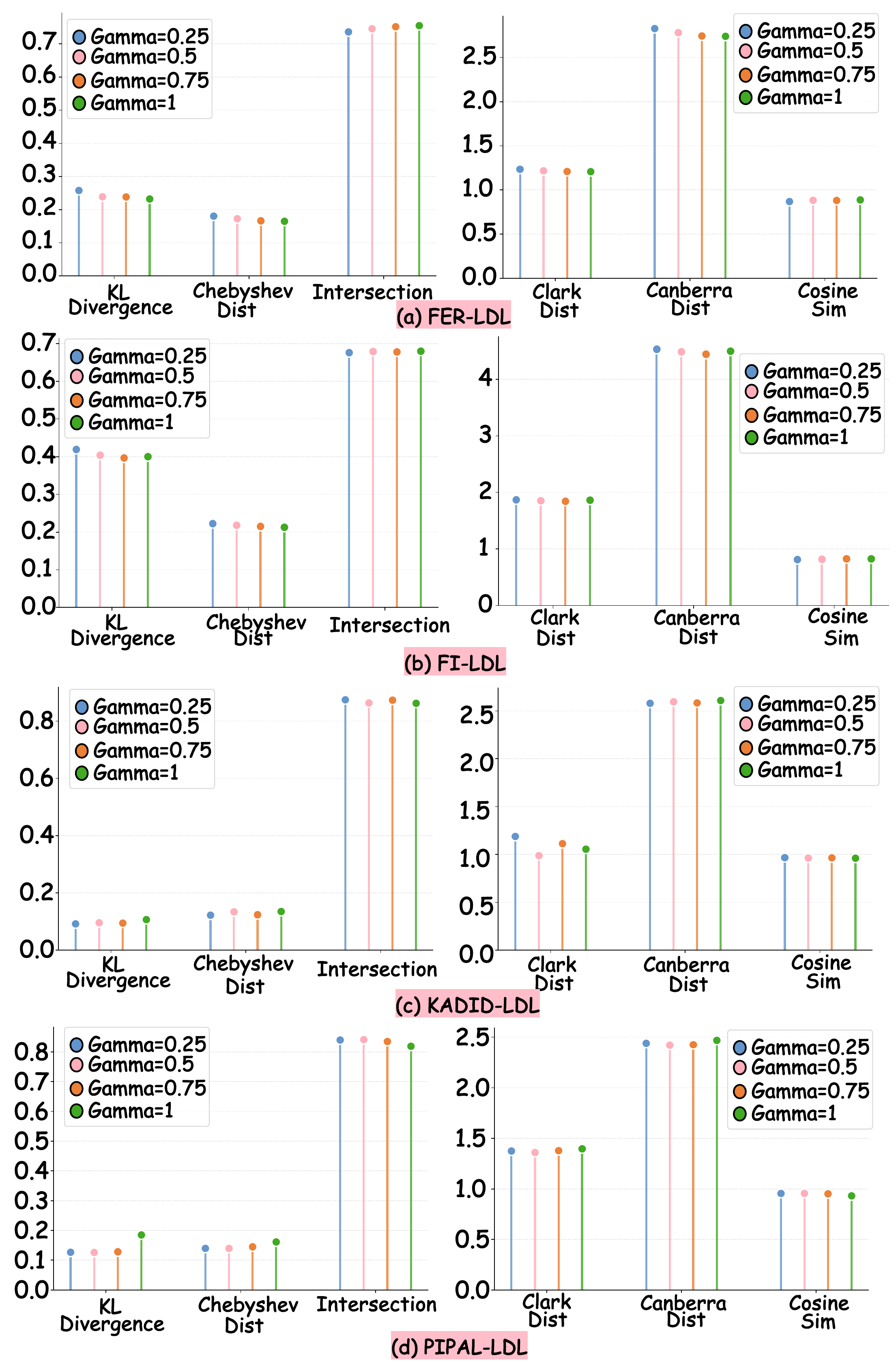} 
  \caption{ 
   Sensitivity analysis of the label distribution skew parameter $\gamma$ on the four benchmark. We evaluate the performance across varying skew levels $\gamma \in \{0.25, 0.5, 0.75, 1.0\}$. The results across six metrics demonstrate that our proposed FedQual framework remains robust to different degrees of label distribution skew.
  }
  \label{fig:full_diff_gamma}
\end{figure*}

\subsection{Robustness to Label Multiplicity}
\label{app:exp_multiplicity}
We extend the analysis of label multiplicity robustness to all four benchmarks in Figure~\ref{fig:full_diff_topk}. 
By varying the Top-$K$ parameter ($K \in \{1, 2, 3, 4\}$) used for constructing non-IID partitions, we observe that the stability reported in the main text is not specific to FER-LDL but generalizes to FI-LDL, PIPAL-LDL, and KADID-LDL. 
The consistently flat trends across all metrics confirm that FedQual effectively handles varying degrees of label sparsity and feature skew, regardless of the specific task domain.
\begin{figure*}[h]
  \centering
  \includegraphics[width=1\linewidth]{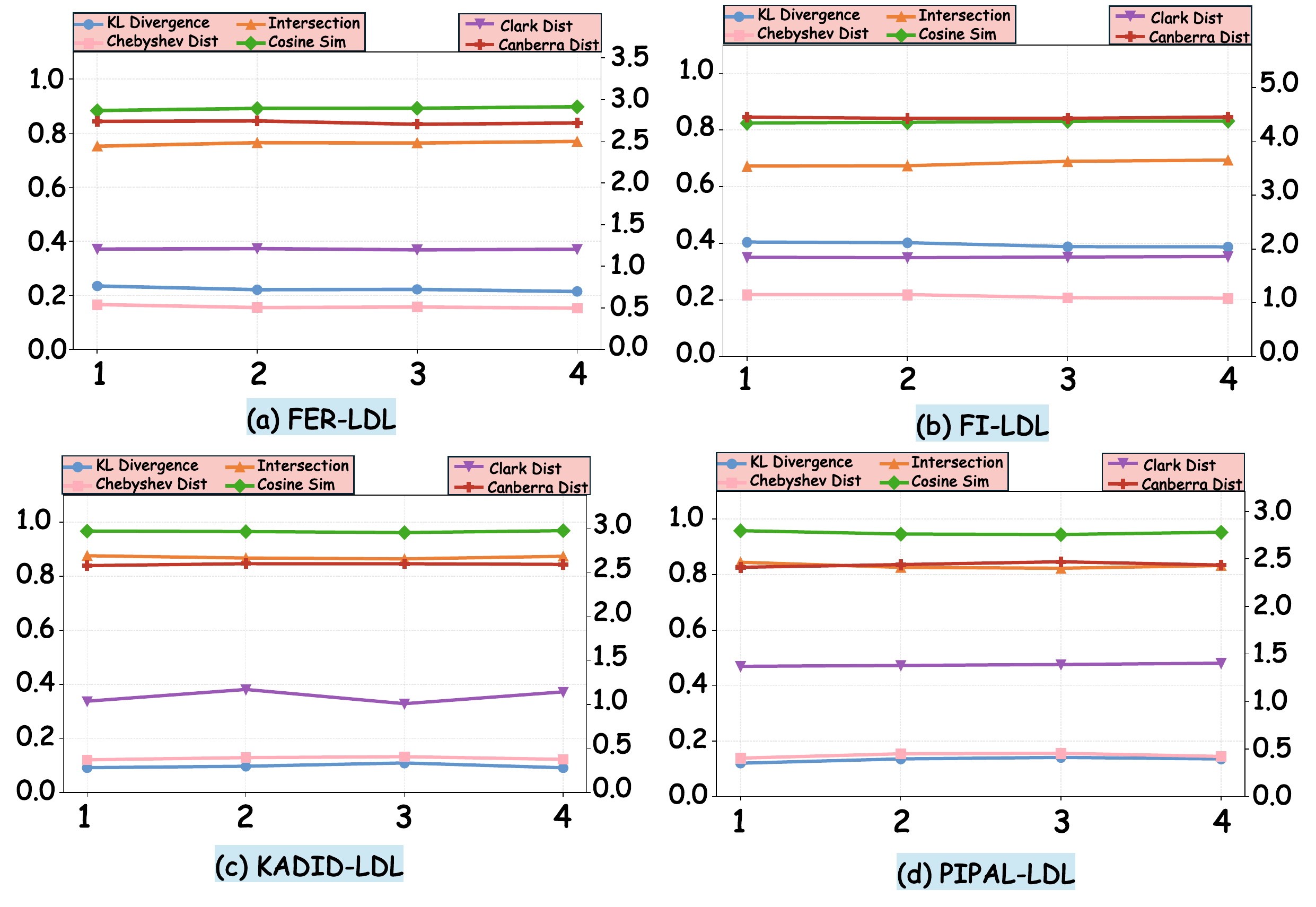} 
  \caption{ 
   Impact of partition label multiplicity (Top-$K$) on the four benchmark. We vary the Top-$K$ parameter ($K \in \{1, 2, 3, 4\}$) used to discretize $\mathbf{d}_{GT}$ for simulating Non-IID feature skew. The flat trends across all six metrics demonstrate that FedQual maintains robust performance regardless of the label density used for client partitioning.
  }
  \label{fig:full_diff_topk}
\end{figure*}

\subsection{Scalability to Federation Size}
\label{app:exp_scalability}
We present the complete scalability analysis across all four benchmarks in Figure~\ref{fig:full_diff_poolsize}. 
By expanding the evaluation range of the total client pool size ($\{25, 50, 75, 100\}$) to FER-LDL, PIPAL-LDL, and KADID-LDL, we observe consistent stability in performance, mirroring the results on FI-LDL reported in the main text. 
These extensive experiments confirm that FedQual can effectively scale to larger federations without suffering from performance degradation.
\begin{figure*}[t]
  \centering
  \includegraphics[width=0.8\linewidth]{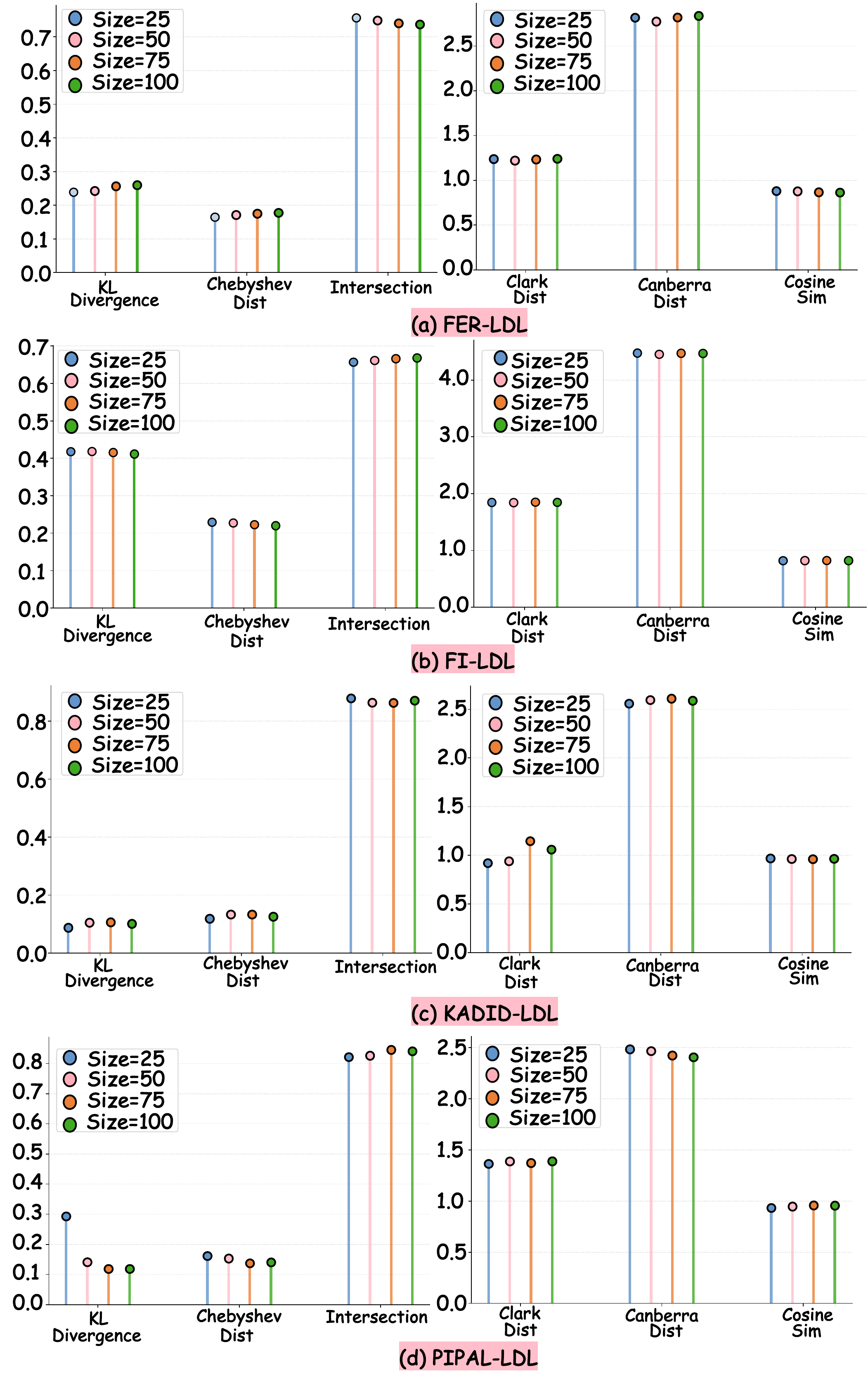} 
  \caption{ 
    Scalability analysis regarding the federation size on the four benchmark. We evaluate the model performance by varying the total client pool size in the range of $\{25, 50, 75, 100\}$. The consistent results across six metrics demonstrate that FedQual scales effectively to larger federations without performance degradation.
  }
  \label{fig:full_diff_poolsize}
\end{figure*}

\subsection{Robustness to Partial Participation}
\label{app:exp_partial}
We extend the partial participation analysis to all four constructed benchmarks in Figure~\ref{fig:full_diff_online}. 
By varying the active client ratio $\rho_{online} \in \{0.2, 0.4, 0.6, 0.8\}$, we observe that the high stability reported in the main text is consistent across both emotion recognition and IQA tasks. 
These results further verify that FedQual remains effective in practical scenarios with intermittent client availability.
\begin{figure*}[t]
  \centering
  \includegraphics[width=0.8\linewidth]{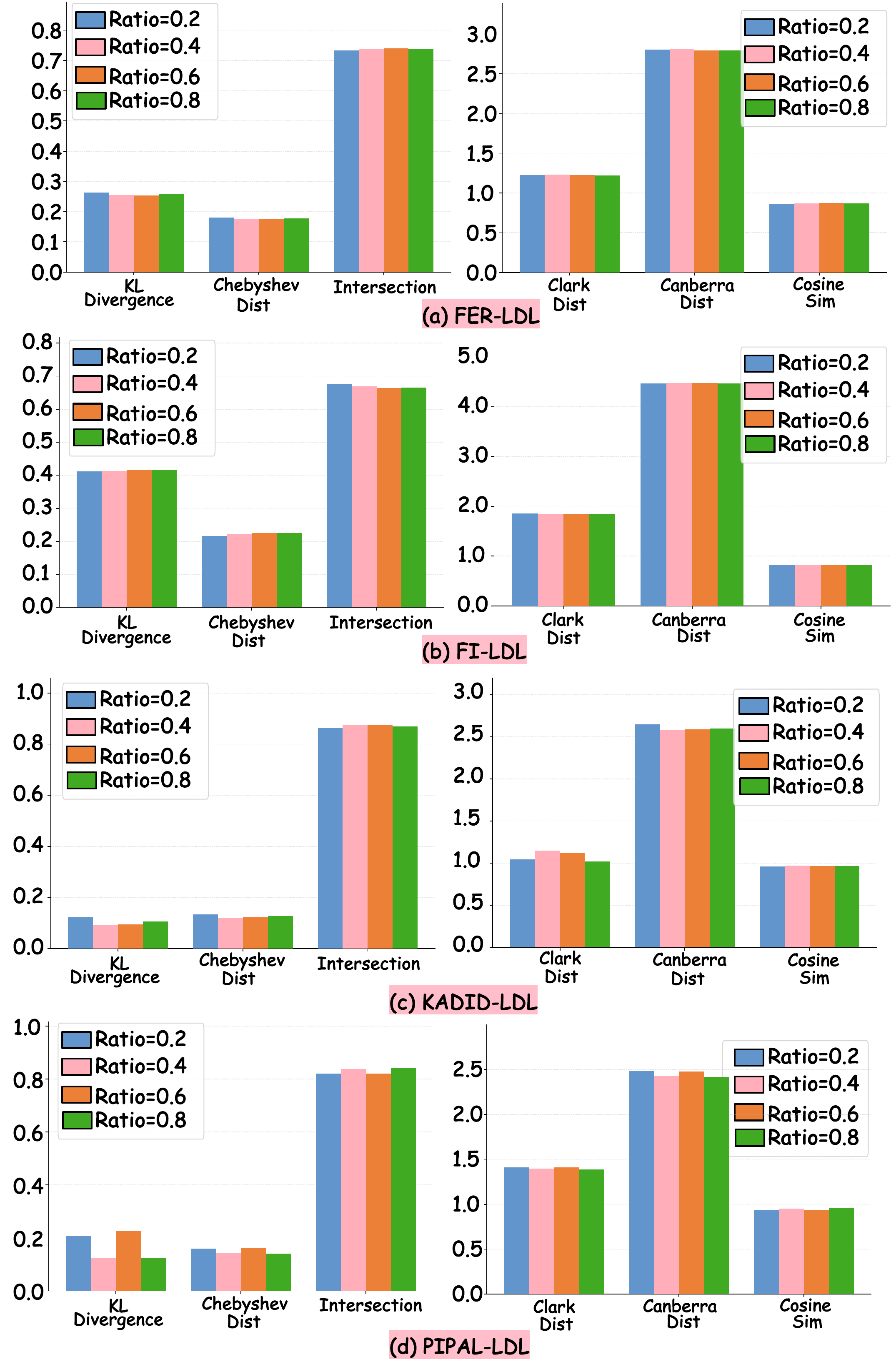} 
  \caption{ 
 Robustness analysis regarding the client participation ratio on the four benchmarks. We evaluate model performance by varying the ratio of active clients per round in the range of $\{0.2, 0.4, 0.6, 0.8\}$. The consistent performance across all six metrics demonstrates that our framework remains effective even under low participation rates.
  }
  \label{fig:full_diff_online}
\end{figure*}

\section{Additional Discussion}
\label{sec:additional}
\subsection{Limitations and Future Directions}
\label{app:limitations}

FedQual currently simplifies the quality indicator $q_m$ by associating it with the number of annotators. In real-world scenarios, annotation fidelity is often decoupled from quantity; for instance, a single domain expert may provide a more precise and nuanced label distribution than a consensus of multiple non-expert annotators. Future research could explore unsupervised quality estimation techniques that infer client reliability directly from the local data manifold or the consistency of learning dynamics, thereby reducing the reliance on external metadata for quality assessment.

Another limitation concerns the trust relationship regarding quality reporting in privacy-sensitive environments. Our framework assumes that quality metadata can be truthfully acquired, yet in practice, verifying the authenticity of such indicators without inspecting raw data remains a challenge. Malicious participants might misreport their quality scores to gain disproportionate influence over the global model, leading to potential security vulnerabilities. Integrating privacy-preserving verification protocols, such as zero-knowledge proofs or hardware-based trusted execution environments, to validate the integrity of the annotation process without data leakage presents a promising direction for enhancing federated robustness.
\subsection{Reproducibility}
\label{sec:reproducibility}

We provide a comprehensive description of our \text{FedQual} framework and its quality-adaptive training pipeline in Section~\ref{sec:method}, with supporting theoretical analysis of the adaptive calibration mechanism documented in Section~\ref{sec:theory}. 
Detailed implementation aspects, including the federated setup (ResNet-18 backbone), the Global Semantic Anchor (GSA) calibration function, and method-specific hyperparameters such as $\beta$ and $\lambda_0$, are meticulously documented in Appendix~\ref{sec:app_details}. 
All experiments were conducted using PyTorch on NVIDIA RTX 4090 GPUs , utilizing standardized communication protocols ($T=100$ rounds) and fixed random seeds to eliminate variance from arbitrary initializations. 
To facilitate further research and ensure full reproducibility, we will release our source code, training scripts, and the four newly constructed Fed-LDL benchmarks (FER-LDL, FI-LDL, PIPAL-LDL, and KADID-LDL). 
This release allows the community to verify our results and adapt the FedQual framework to other privacy-sensitive applications requiring nuanced semantic understanding.

\subsection{Use of LLMs in Writing}
\label{sec:llm_writing}

large language models (LLMs) was used solely for language polishing, enhancing clarity, grammar, and stylistic consistency. The LLM was not involved in idea generation, method design, algorithm development, experimental planning, or result analysis. All scientific contributions, including the formulation of research questions, experimental design, data analysis, and interpretation of results, were made by the authors themselves.

%% file: references.bib
@article{fer2013,
  author       = {Ian J. Goodfellow and
                  Dumitru Erhan and
                  Pierre Luc Carrier and
                  Aaron C. Courville and
                  Mehdi Mirza and
                  Benjamin Hamner and
                  William Cukierski and
                  Yichuan Tang and
                  David Thaler and
                  Dong{-}Hyun Lee and
                  Yingbo Zhou and
                  Chetan Ramaiah and
                  Fangxiang Feng and
                  Ruifan Li and
                  Xiaojie Wang and
                  Dimitris Athanasakis and
                  John Shawe{-}Taylor and
                  Maxim Milakov and
                  John Park and
                  Radu Tudor Ionescu and
                  Marius Popescu and
                  Cristian Grozea and
                  James Bergstra and
                  Jingjing Xie and
                  Lukasz Romaszko and
                  Bing Xu and
                  Chuang Zhang and
                  Yoshua Bengio},
  title        = {Challenges in representation learning: {A} report on three machine
                  learning contests},
  journal      = {Neural Networks},
  volume       = {64},
  pages        = {59--63},
  year         = {2015},
  url          = {https://doi.org/10.1016/j.neunet.2014.09.005},
  doi          = {10.1016/J.NEUNET.2014.09.005},
  bibsource    = {dblp computer science bibliography, https://dblp.org}
}

@article{gengldl1,
  author       = {Xin Geng and
                  Zhi{-}Hua Zhou and
                  Kate Smith{-}Miles},
  title        = {Automatic Age Estimation Based on Facial Aging Patterns},
  journal      = {{IEEE} Trans. Pattern Anal. Mach. Intell.},
  volume       = {29},
  number       = {12},
  pages        = {2234--2240},
  year         = {2007},
  url          = {https://doi.org/10.1109/TPAMI.2007.70733},
  doi          = {10.1109/TPAMI.2007.70733},
  bibsource    = {dblp computer science bibliography, https://dblp.org}
}

@article{gengldl2,
  author       = {Xin Geng},
  title        = {Label Distribution Learning},
  journal      = {{IEEE} Trans. Knowl. Data Eng.},
  volume       = {28},
  number       = {7},
  pages        = {1734--1748},
  year         = {2016},
  url          = {https://doi.org/10.1109/TKDE.2016.2545658},
  doi          = {10.1109/TKDE.2016.2545658},
  bibsource    = {dblp computer science bibliography, https://dblp.org}
}

@inproceedings{kou2025nips,
  title     = {RankMatch: A Novel Approach to Semi-Supervised Label Distribution Learning Leveraging Rank Correlation between Labels},
  author    = {Zhiqiang Kou and Yucheng Xie and Hailin Wang and Jing Wang and Mingkun Xie and Shuo Chen and Yuheng Jia and Tongliang Liu and Xin Geng},
  booktitle = {Proceedings of the 39th Conference on Neural Information Processing Systems (NeurIPS 2025)},
  year      = {2025},
  address   = {San Diego, California, USA},
  month     = {December},
  organization = {Neural Information Processing Systems Foundation},
 
}

@ARTICLE{koutnnls,
  author={Kou, Zhiqiang and Wang, Jing and Jia, Yuheng and Liu, Biao and Geng, Xin},
  journal={IEEE Transactions on Neural Networks and Learning Systems}, 
  title={Instance-Dependent Inaccurate Label Distribution Learning}, 
  year={2025},
  volume={36},
  number={1},
  pages={1425-1437},
  }

@misc{kou2026pu,
      title={Positive-Unlabeled Reinforcement Learning Distillation for On-Premise Small Models}, 
      author={Zhiqiang Kou and Junyang Chen and Xin-Qiang Cai and Xiaobo Xia and Ming-Kun Xie and Dong-Dong Wu and Biao Liu and Yuheng Jia and Xin Geng and Masashi Sugiyama and Tat-Seng Chua},
      year={2026},
      eprint={2601.20687},
      archivePrefix={arXiv},
      primaryClass={cs.LG},
      url={https://arxiv.org/abs/2601.20687}, 
}

@article{yangfl3,
  author       = {Di Chai and
                  Leye Wang and
                  Liu Yang and
                  Junxue Zhang and
                  Kai Chen and
                  Qiang Yang},
  title        = {A Survey for Federated Learning Evaluations: Goals and Measures},
  journal      = {{IEEE} Trans. Knowl. Data Eng.},
  volume       = {36},
  number       = {10},
  pages        = {5007--5024},
  year         = {2024},
  url          = {https://doi.org/10.1109/TKDE.2024.3382002},
  doi          = {10.1109/TKDE.2024.3382002},
  bibsource    = {dblp computer science bibliography, https://dblp.org}
}

@inproceedings{xuldl,
  author       = {Miao Xu and
                  Zhi{-}Hua Zhou},
  editor       = {Carles Sierra},
  title        = {Incomplete Label Distribution Learning},
  booktitle    = {Proceedings of the Twenty-Sixth International Joint Conference on
                  Artificial Intelligence, {IJCAI} 2017, Melbourne, Australia, August
                  19-25, 2017},
  pages        = {3175--3181},
  publisher    = {ijcai.org},
  year         = {2017},
  url          = {https://doi.org/10.24963/ijcai.2017/443},
  doi          = {10.24963/IJCAI.2017/443},
  bibsource    = {dblp computer science bibliography, https://dblp.org}
}

@article{yangfl4,
  author       = {Yang Liu and
                  Yan Kang and
                  Tianyuan Zou and
                  Yanhong Pu and
                  Yuanqin He and
                  Xiaozhou Ye and
                  Ye Ouyang and
                  Ya{-}Qin Zhang and
                  Qiang Yang},
  title        = {Vertical Federated Learning: Concepts, Advances, and Challenges},
  journal      = {{IEEE} Trans. Knowl. Data Eng.},
  volume       = {36},
  number       = {7},
  pages        = {3615--3634},
  year         = {2024},
  url          = {https://doi.org/10.1109/TKDE.2024.3352628},
  doi          = {10.1109/TKDE.2024.3352628},
  bibsource    = {dblp computer science bibliography, https://dblp.org}
}

@inproceedings{kouijcai2,
  author       = {Zhiqiang Kou and
                  Jing Wang and
                  Jiawei Tang and
                  Yuheng Jia and
                  Boyu Shi and
                  Xin Geng},
  title        = {Exploiting Multi-Label Correlation in Label Distribution Learning},
  booktitle    = {Proceedings of the Thirty-Third International Joint Conference on
                  Artificial Intelligence, {IJCAI} 2024, Jeju, South Korea, August 3-9,
                  2024},
  pages        = {4326--4334},
  publisher    = {ijcai.org},
  year         = {2024},
  url          = {https://www.ijcai.org/proceedings/2024/478},
  bibsource    = {dblp computer science bibliography, https://dblp.org}
}

@inproceedings{kouijcai1,
  author       = {Zhiqiang Kou and
                  Si Qin and
                  Hailin Wang and
                  Jing Wang and
                  Ming{-}Kun Xie and
                  Shuo Chen and
                  Yuheng Jia and
                  Tongliang Liu and
                  Masashi Sugiyama and
                  Xin Geng},
  title        = {Label Distribution Learning with Biased Annotations Assisted by Multi-Label
                  Learning},
  booktitle    = {Proceedings of the Thirty-Fourth International Joint Conference on
                  Artificial Intelligence, {IJCAI} 2025, Montreal, Canada, August 16-22,
                  2025},
  pages        = {5545--5553},
  publisher    = {ijcai.org},
  year         = {2025},
  url          = {https://doi.org/10.24963/ijcai.2025/617},
  doi          = {10.24963/IJCAI.2025/617},
  bibsource    = {dblp computer science bibliography, https://dblp.org}
}

@inproceedings{kadid10k,
  author       = {Hanhe Lin and
                  Vlad Hosu and
                  Dietmar Saupe},
  title        = {KADID-10k: {A} Large-scale Artificially Distorted {IQA} Database},
  booktitle    = {11th International Conference on Quality of Multimedia Experience
                  QoMEX 2019, Berlin, Germany, June 5-7, 2019},
  pages        = {1--3},
  publisher    = {{IEEE}},
  year         = {2019},
  url          = {https://doi.org/10.1109/QoMEX.2019.8743252},
  doi          = {10.1109/QOMEX.2019.8743252},
  bibsource    = {dblp computer science bibliography, https://dblp.org}
}

@inproceedings{fl,
  author       = {Quanzeng You and
                  Jiebo Luo and
                  Hailin Jin and
                  Jianchao Yang},
  editor       = {Dale Schuurmans and
                  Michael P. Wellman},
  title        = {Building a Large Scale Dataset for Image Emotion Recognition: The
                  Fine Print and The Benchmark},
  booktitle    = {Proceedings of the Thirtieth {AAAI} Conference on Artificial Intelligence,
                  February 12-17, 2016, Phoenix, Arizona, {USA}},
  pages        = {308--314},
  publisher    = {{AAAI} Press},
  year         = {2016},
  url          = {https://doi.org/10.1609/aaai.v30i1.9987},
  doi          = {10.1609/AAAI.V30I1.9987},
  bibsource    = {dblp computer science bibliography, https://dblp.org}
}

@inproceedings{pipal,
  author       = {Jinjin Gu and
                  Haoming Cai and
                  Haoyu Chen and
                  Xiaoxing Ye and
                  Jimmy S. Ren and
                  Chao Dong},
  editor       = {Andrea Vedaldi and
                  Horst Bischof and
                  Thomas Brox and
                  Jan{-}Michael Frahm},
  title        = {{PIPAL:} {A} Large-Scale Image Quality Assessment Dataset for Perceptual
                  Image Restoration},
  booktitle    = {Computer Vision - {ECCV} 2020 - 16th European Conference, Glasgow,
                  UK, August 23-28, 2020, Proceedings, Part {XI}},
  series       = {Lecture Notes in Computer Science},
  volume       = {12356},
  pages        = {633--651},
  publisher    = {Springer},
  year         = {2020},
  url          = {https://doi.org/10.1007/978-3-030-58621-8\_37},
  doi          = {10.1007/978-3-030-58621-8\_37},
  bibsource    = {dblp computer science bibliography, https://dblp.org}
}

@inproceedings{fedavg,
  author       = {Brendan McMahan and
                  Eider Moore and
                  Daniel Ramage and
                  Seth Hampson and
                  Blaise Ag{\"{u}}era y Arcas},
  editor       = {Aarti Singh and
                  Xiaojin (Jerry) Zhu},
  title        = {Communication-Efficient Learning of Deep Networks from Decentralized
                  Data},
  booktitle    = {Proceedings of the 20th International Conference on Artificial Intelligence
                  and Statistics, {AISTATS} 2017, 20-22 April 2017, Fort Lauderdale,
                  FL, {USA}},
  series       = {Proceedings of Machine Learning Research},
  volume       = {54},
  pages        = {1273--1282},
  publisher    = {{PMLR}},
  year         = {2017},
  url          = {http://proceedings.mlr.press/v54/mcmahan17a.html},
  bibsource    = {dblp computer science bibliography, https://dblp.org}
}

@inproceedings{fedgloss,
  author       = {Debora Caldarola and
                  Pietro Cagnasso and
                  Barbara Caputo and
                  Marco Ciccone},
  title        = {Beyond Local Sharpness: Communication-Efficient Global Sharpness-aware
                  Minimization for Federated Learning},
  booktitle    = {{IEEE/CVF} Conference on Computer Vision and Pattern Recognition,
                  {CVPR} 2025, Nashville, TN, USA, June 11-15, 2025},
  pages        = {25187--25197},
  publisher    = {Computer Vision Foundation / {IEEE}},
  year         = {2025},
  url          = {https://openaccess.thecvf.com/content/CVPR2025/html/Caldarola\_Beyond\_Local\_Sharpness\_Communication-Efficient\_Global\_Sharpness-aware\_Minimization\_for\_Federated\_Learning\_CVPR\_2025\_paper.html},
  doi          = {10.1109/CVPR52734.2025.02345},
  bibsource    = {dblp computer science bibliography, https://dblp.org}
}

@inproceedings{fedprox,
  author       = {Tian Li and
                  Anit Kumar Sahu and
                  Manzil Zaheer and
                  Maziar Sanjabi and
                  Ameet Talwalkar and
                  Virginia Smith},
  editor       = {Inderjit S. Dhillon and
                  Dimitris S. Papailiopoulos and
                  Vivienne Sze},
  title        = {Federated Optimization in Heterogeneous Networks},
  booktitle    = {Proceedings of the Third Conference on Machine Learning and Systems,
                  MLSys 2020, Austin, TX, USA, March 2-4, 2020},
  publisher    = {mlsys.org},
  year         = {2020},
  url          = {https://proceedings.mlsys.org/paper\_files/paper/2020/hash/1f5fe83998a09396ebe6477d9475ba0c-Abstract.html},
  bibsource    = {dblp computer science bibliography, https://dblp.org}
}

@inproceedings{fedrdn,
  author       = {Yunlu Yan and
                  Huazhu Fu and
                  Yuexiang Li and
                  Jinheng Xie and
                  Jun Ma and
                  Guang Yang and
                  Lei Zhu},
  title        = {A Simple Data Augmentation for Feature Distribution Skewed Federated
                  Learning},
  booktitle    = {{IEEE/CVF} Conference on Computer Vision and Pattern Recognition,
                  {CVPR} 2025, Nashville, TN, USA, June 11-15, 2025},
  pages        = {25749--25758},
  publisher    = {Computer Vision Foundation / {IEEE}},
  year         = {2025},
  url          = {https://openaccess.thecvf.com/content/CVPR2025/html/Yan\_A\_Simple\_Data\_Augmentation\_for\_Feature\_Distribution\_Skewed\_Federated\_Learning\_CVPR\_2025\_paper.html},
  doi          = {10.1109/CVPR52734.2025.02398},
  bibsource    = {dblp computer science bibliography, https://dblp.org}
}

@inproceedings{moon,
  author       = {Qinbin Li and
                  Bingsheng He and
                  Dawn Song},
  title        = {Model-Contrastive Federated Learning},
  booktitle    = {{IEEE} Conference on Computer Vision and Pattern Recognition, {CVPR}
                  2021, virtual, June 19-25, 2021},
  pages        = {10713--10722},
  publisher    = {Computer Vision Foundation / {IEEE}},
  year         = {2021},
  url          = {https://openaccess.thecvf.com/content/CVPR2021/html/Li\_Model-Contrastive\_Federated\_Learning\_CVPR\_2021\_paper.html},
  doi          = {10.1109/CVPR46437.2021.01057},
  bibsource    = {dblp computer science bibliography, https://dblp.org}
}

@article{inaccurate_ldl,
  author       = {Zhiqiang Kou and
                  Yuheng Jia and
                  Xin Geng},
  title        = {Inaccurate Label Distribution Learning},
  journal      = {CoRR},
  volume       = {abs/2302.13000},
  year         = {2023},
  url          = {https://doi.org/10.48550/arXiv.2302.13000},
  doi          = {10.48550/ARXIV.2302.13000},
  eprinttype    = {arXiv},
  eprint       = {2302.13000},
  bibsource    = {dblp computer science bibliography, https://dblp.org}
}

@inproceedings{facil_1,
  author       = {Jiahui She and
                  Yibo Hu and
                  Hailin Shi and
                  Jun Wang and
                  Qiu Shen and
                  Tao Mei},
  title        = {Dive Into Ambiguity: Latent Distribution Mining and Pairwise Uncertainty
                  Estimation for Facial Expression Recognition},
  booktitle    = {{IEEE} Conference on Computer Vision and Pattern Recognition, {CVPR}
                  2021, virtual, June 19-25, 2021},
  pages        = {6248--6257},
  publisher    = {Computer Vision Foundation / {IEEE}},
  year         = {2021},
  url          = {https://openaccess.thecvf.com/content/CVPR2021/html/She\_Dive\_Into\_Ambiguity\_Latent\_Distribution\_Mining\_and\_Pairwise\_Uncertainty\_Estimation\_CVPR\_2021\_paper.html},
  doi          = {10.1109/CVPR46437.2021.00618},
  bibsource    = {dblp computer science bibliography, https://dblp.org}
}

@inproceedings{facial_2,
  author       = {Nhat Le and
                  Khanh Nguyen and
                  Quang D. Tran and
                  Erman Tjiputra and
                  Bac Le and
                  Anh Nguyen},
  title        = {Uncertainty-aware Label Distribution Learning for Facial Expression
                  Recognition},
  booktitle    = {{IEEE/CVF} Winter Conference on Applications of Computer Vision, {WACV}
                  2023, Waikoloa, HI, USA, January 2-7, 2023},
  pages        = {6077--6086},
  publisher    = {{IEEE}},
  year         = {2023},
  url          = {https://doi.org/10.1109/WACV56688.2023.00603},
  doi          = {10.1109/WACV56688.2023.00603},
  bibsource    = {dblp computer science bibliography, https://dblp.org}
}

@inproceedings{medical_1,
  author       = {Xiangyu Li and
                  Xinjie Liang and
                  Gongning Luo and
                  Wei Wang and
                  Kuanquan Wang and
                  Shuo Li},
  editor       = {Linwei Wang and
                  Qi Dou and
                  P. Thomas Fletcher and
                  Stefanie Speidel and
                  Shuo Li},
  title        = {{ULTRA:} Uncertainty-Aware Label Distribution Learning for Breast
                  Tumor Cellularity Assessment},
  booktitle    = {Medical Image Computing and Computer Assisted Intervention - {MICCAI}
                  2022 - 25th International Conference, Singapore, September 18-22,
                  2022, Proceedings, Part {III}},
  series       = {Lecture Notes in Computer Science},
  volume       = {13433},
  pages        = {303--312},
  publisher    = {Springer},
  year         = {2022},
  url          = {https://doi.org/10.1007/978-3-031-16437-8\_29},
  doi          = {10.1007/978-3-031-16437-8\_29},
  bibsource    = {dblp computer science bibliography, https://dblp.org}
}

@article{medical_2,
  author       = {Chao Chen and
                  Zhihong Chen and
                  Xinyu Jin and
                  Lanjuan Li and
                  William Speier and
                  Corey W. Arnold},
  title        = {Attention-Guided Discriminative Region Localization and Label Distribution
                  Learning for Bone Age Assessment},
  journal      = {{IEEE} J. Biomed. Health Informatics},
  volume       = {26},
  number       = {3},
  pages        = {1208--1218},
  year         = {2022},
  url          = {https://doi.org/10.1109/JBHI.2021.3095128},
  doi          = {10.1109/JBHI.2021.3095128},
  bibsource    = {dblp computer science bibliography, https://dblp.org}
}

@article{medical_3,
  author       = {Hao{-}Dong Zheng and
                  Lei Yu and
                  Yu{-}Ting Lu and
                  Wei{-}Hao Zhang and
                  Yan{-}Jun Yu},
  title        = {{KDALDL:} Knowledge Distillation-Based Adaptive Label Distribution
                  Learning Network for Bone Age Assessment},
  journal      = {{IEEE} Access},
  volume       = {12},
  pages        = {17679--17689},
  year         = {2024},
  url          = {https://doi.org/10.1109/ACCESS.2024.3358821},
  doi          = {10.1109/ACCESS.2024.3358821},
  bibsource    = {dblp computer science bibliography, https://dblp.org}
}

@article{Xu2018LabelEF,
  author       = {Ning Xu and
                  Yun{-}Peng Liu and
                  Xin Geng},
  title        = {Label Enhancement for Label Distribution Learning},
  journal      = {{IEEE} Trans. Knowl. Data Eng.},
  volume       = {33},
  number       = {4},
  pages        = {1632--1643},
  year         = {2021},
  url          = {https://doi.org/10.1109/TKDE.2019.2947040},
  doi          = {10.1109/TKDE.2019.2947040},
  bibsource    = {dblp computer science bibliography, https://dblp.org}
}

@inproceedings{Ren2019LabelDL,
  author       = {Tingting Ren and
                  Xiuyi Jia and
                  Weiwei Li and
                  Shu Zhao},
  editor       = {Sarit Kraus},
  title        = {Label Distribution Learning with Label Correlations via Low-Rank Approximation},
  booktitle    = {Proceedings of the Twenty-Eighth International Joint Conference on
                  Artificial Intelligence, {IJCAI} 2019, Macao, China, August 10-16,
                  2019},
  pages        = {3325--3331},
  publisher    = {ijcai.org},
  year         = {2019},
  url          = {https://doi.org/10.24963/ijcai.2019/461},
  doi          = {10.24963/IJCAI.2019/461},
  bibsource    = {dblp computer science bibliography, https://dblp.org}
}

@article{fl_overeivew,
  author       = {Peter Kairouz and
                  H. Brendan McMahan and
                  Brendan Avent and
                  Aur{\'{e}}lien Bellet and
                  Mehdi Bennis and
                  Arjun Nitin Bhagoji and
                  Kallista A. Bonawitz and
                  Zachary Charles and
                  Graham Cormode and
                  Rachel Cummings and
                  Rafael G. L. D'Oliveira and
                  Hubert Eichner and
                  Salim El Rouayheb and
                  David Evans and
                  Josh Gardner and
                  Zachary Garrett and
                  Adri{\`{a}} Gasc{\'{o}}n and
                  Badih Ghazi and
                  Phillip B. Gibbons and
                  Marco Gruteser and
                  Za{\"{\i}}d Harchaoui and
                  Chaoyang He and
                  Lie He and
                  Zhouyuan Huo and
                  Ben Hutchinson and
                  Justin Hsu and
                  Martin Jaggi and
                  Tara Javidi and
                  Gauri Joshi and
                  Mikhail Khodak and
                  Jakub Kone{\v{c}}n{\'y} and
                  Aleksandra Korolova and
                  Farinaz Koushanfar and
                  Sanmi Koyejo and
                  Tancr{\`{e}}de Lepoint and
                  Yang Liu and
                  Prateek Mittal and
                  Mehryar Mohri and
                  Richard Nock and
                  Ayfer {\"{O}}zg{\"{u}}r and
                  Rasmus Pagh and
                  Hang Qi and
                  Daniel Ramage and
                  Ramesh Raskar and
                  Mariana Raykova and
                  Dawn Song and
                  Weikang Song and
                  Sebastian U. Stich and
                  Ziteng Sun and
                  Ananda Theertha Suresh and
                  Florian Tram{\`{e}}r and
                  Praneeth Vepakomma and
                  Jianyu Wang and
                  Li Xiong and
                  Zheng Xu and
                  Qiang Yang and
                  Felix X. Yu and
                  Han Yu and
                  Sen Zhao},
  title        = {Advances and Open Problems in Federated Learning},
  journal      = {Found. Trends Mach. Learn.},
  volume       = {14},
  number       = {1-2},
  pages        = {1--210},
  year         = {2021},
  url          = {https://doi.org/10.1561/2200000083},
  doi          = {10.1561/2200000083},
  bibsource    = {dblp computer science bibliography, https://dblp.org}
}

@inproceedings{Ji2024FedFixerMH,
  author       = {Xinyuan Ji and
                  Zhaowei Zhu and
                  Wei Xi and
                  Olga Gadyatskaya and
                  Zilong Song and
                  Yong Cai and
                  Yang Liu},
  editor       = {Michael J. Wooldridge and
                  Jennifer G. Dy and
                  Sriraam Natarajan},
  title        = {FedFixer: Mitigating Heterogeneous Label Noise in Federated Learning},
  booktitle    = {Thirty-Eighth {AAAI} Conference on Artificial Intelligence, {AAAI}
                  2024, Thirty-Sixth Conference on Innovative Applications of Artificial
                  Intelligence, {IAAI} 2024, Fourteenth Symposium on Educational Advances
                  in Artificial Intelligence, {EAAI} 2014, February 20-27, 2024, Vancouver,
                  Canada},
  pages        = {12830--12838},
  publisher    = {{AAAI} Press},
  year         = {2024},
  url          = {https://doi.org/10.1609/aaai.v38i11.29179},
  doi          = {10.1609/AAAI.V38I11.29179},
  bibsource    = {dblp computer science bibliography, https://dblp.org}
}

@inproceedings{Zeng2024FedESFE,
  author       = {Bixiao Zeng and
                  Xiaodong Yang and
                  Yiqiang Chen and
                  Zhiqi Shen and
                  Hanchao Yu and
                  Yingwei Zhang},
  title        = {FedES: Federated Early-Stopping for Hindering Memorizing Heterogeneous
                  Label Noise},
  booktitle    = {Proceedings of the Thirty-Third International Joint Conference on
                  Artificial Intelligence, {IJCAI} 2024, Jeju, South Korea, August 3-9,
                  2024},
  pages        = {5416--5424},
  publisher    = {ijcai.org},
  year         = {2024},
  url          = {https://www.ijcai.org/proceedings/2024/599},
  bibsource    = {dblp computer science bibliography, https://dblp.org}
}

@article{Cho2026FedENLCAE,
  title={FedENLC: An End-to-End Noisy Label Correction Framework in Federated Learning},
  author={Cho, Yeji and Kim, Junghyun},
  journal={Mathematics},
  volume={14},
  number={2},
  pages={290},
  year={2026},
  publisher={MDPI}
}

@inproceedings{Li2023FedDivCN,
  author       = {Jichang Li and
                  Guanbin Li and
                  Hui Cheng and
                  Zicheng Liao and
                  Yizhou Yu},
  editor       = {Michael J. Wooldridge and
                  Jennifer G. Dy and
                  Sriraam Natarajan},
  title        = {FedDiv: Collaborative Noise Filtering for Federated Learning with
                  Noisy Labels},
  booktitle    = {Thirty-Eighth {AAAI} Conference on Artificial Intelligence, {AAAI}
                  2024, Thirty-Sixth Conference on Innovative Applications of Artificial
                  Intelligence, {IAAI} 2024, Fourteenth Symposium on Educational Advances
                  in Artificial Intelligence, {EAAI} 2014, February 20-27, 2024, Vancouver,
                  Canada},
  pages        = {3118--3126},
  publisher    = {{AAAI} Press},
  year         = {2024},
  url          = {https://doi.org/10.1609/aaai.v38i4.28095},
  doi          = {10.1609/AAAI.V38I4.28095},
  bibsource    = {dblp computer science bibliography, https://dblp.org}
}

@article{Wang2021LabelDL,
  author       = {Jing Wang and
                  Xin Geng},
  title        = {Label Distribution Learning by Exploiting Label Distribution Manifold},
  journal      = {{IEEE} Trans. Neural Networks Learn. Syst.},
  volume       = {34},
  number       = {2},
  pages        = {839--852},
  year         = {2023},
  url          = {https://doi.org/10.1109/TNNLS.2021.3103178},
  doi          = {10.1109/TNNLS.2021.3103178},
  bibsource    = {dblp computer science bibliography, https://dblp.org}
}

@article{Xu2024IncompleteLD,
  author       = {Suping Xu and
                  Lin Shang and
                  Furao Shen and
                  Xibei Yang and
                  Witold Pedrycz},
  title        = {Incomplete label distribution learning via label correlation decomposition},
  journal      = {Inf. Fusion},
  volume       = {113},
  pages        = {102600},
  year         = {2025},
  url          = {https://doi.org/10.1016/j.inffus.2024.102600},
  doi          = {10.1016/J.INFFUS.2024.102600},
  bibsource    = {dblp computer science bibliography, https://dblp.org}
}

@inproceedings{GSA_1,
  author       = {Tao Lin and
                  Lingjing Kong and
                  Sebastian U. Stich and
                  Martin Jaggi},
  editor       = {Hugo Larochelle and
                  Marc'Aurelio Ranzato and
                  Raia Hadsell and
                  Maria{-}Florina Balcan and
                  Hsuan{-}Tien Lin},
  title        = {Ensemble Distillation for Robust Model Fusion in Federated Learning},
  booktitle    = {Advances in Neural Information Processing Systems 33: Annual Conference
                  on Neural Information Processing Systems 2020, NeurIPS 2020, December
                  6-12, 2020, virtual},
  year         = {2020},
  url          = {https://proceedings.neurips.cc/paper/2020/hash/18df51b97ccd68128e994804f3eccc87-Abstract.html},
  bibsource    = {dblp computer science bibliography, https://dblp.org}
}

@inproceedings{GSA_3,
  author       = {Hong{-}You Chen and
                  Wei{-}Lun Chao},
  title        = {FedBE: Making Bayesian Model Ensemble Applicable to Federated Learning},
  booktitle    = {9th International Conference on Learning Representations, {ICLR} 2021,
                  Virtual Event, Austria, May 3-7, 2021},
  publisher    = {OpenReview.net},
  year         = {2021},
  url          = {https://openreview.net/forum?id=dgtpE6gKjHn},
  bibsource    = {dblp computer science bibliography, https://dblp.org}
}

@article{GSA_4,
  author       = {Daliang Li and
                  Junpu Wang},
  title        = {FedMD: Heterogenous Federated Learning via Model Distillation},
  journal      = {CoRR},
  volume       = {abs/1910.03581},
  year         = {2019},
  url          = {http://arxiv.org/abs/1910.03581},
  eprinttype    = {arXiv},
  eprint       = {1910.03581},
  bibsource    = {dblp computer science bibliography, https://dblp.org}
}

@misc{fedharmony,
      title={FedHarmony: Harmonizing Heterogeneous Label Correlations in Federated Multi-Label Learning}, 
      author={Zhiqiang Kou and Junxiang Wu and Wenke Huang and Wenwen He and Ming-Kun Xie and Changwei Wang and Yuheng Jia and Di Jiang and Yang Liu and Xin Geng and Qiang Yang},
      year={2026},
      eprint={2604.28024},
      archivePrefix={arXiv},
      primaryClass={cs.LG},
      url={https://arxiv.org/abs/2604.28024}, 
}
